\documentclass[12pt]{article} 

\usepackage{a4wide}

\usepackage{graphics,graphicx,epsfig} 
\usepackage{amsfonts, amsmath, amssymb, wasysym}
\usepackage{stmaryrd}

\usepackage{tikz}


\usepackage{url}
\usepackage{latexsym,mathrsfs}
\usepackage{amsfonts, amsmath, amssymb}
\usepackage{graphics}
\usepackage{pstricks, pst-node, pst-tree}
\usepackage{stmaryrd}



\usepackage{url}

\newcommand{\eq}{\leftrightarrow}

\newcommand{\imp}{\rightarrow}
\newcommand{\Imp}{\Rightarrow}

\newcommand{\et}{\wedge}
\newcommand{\vel}{\vee}
\newcommand{\Et}{\bigwedge}

\newcommand{\T}{\top}

\newcommand{\Dia}{\Diamond}

\newcommand{\M}{\hat{K}}

\renewcommand{\phi}{\varphi}
\newcommand{\union}{\cup}

\newcommand{\inter}{\cap}

\newcommand{\bisim}{{\raisebox{.3ex}[0mm][0mm]{\ensuremath{\medspace \underline{\! \leftrightarrow\!}\medspace}}}}

\newcommand{\weg}[1]{}

\newcommand{\lbr}{[\![}
\newcommand{\rbr}{]\!]}
 
\newcommand{\II}[1]{\lbr #1 \rbr} 

\newcommand{\pre}{\mathsf{pre}}

\newcommand{\Formulas}{{\mathcal L}}

\newcommand{\langu}{\Formulas}
\newcommand{\States}{S}

\newcommand{\state}{s}

\newcommand{\stateb}{t}
\newcommand{\statec}{u}

\newcommand{\Atoms}{P}

\newcommand{\atom}{p}

\newcommand{\Agents}{A}
\newcommand{\Group}{B}
\newcommand{\agent}{a}

\newcommand{\agentb}{b}
\newcommand{\agentc}{c} 

\newcommand{\amodel}{\ensuremath{\mathsf{M}}}

\newcommand{\arel}{\ensuremath{\mathsf{R}}}
\newcommand{\Actions}{\ensuremath{\mathsf{S}}}
\newcommand{\actiona}{\ensuremath{\mathsf{s}}}
\newcommand{\actionb}{\ensuremath{\mathsf{t}}}

\newcommand{\Nat}{\mathbb N}
\newcommand{\Naturals}{\Nat}


\newcommand{\lang}{\langu}





\usepackage[thmmarks]{ntheorem}
\theoremsymbol{\ensuremath{\dashv}}
\usepackage{newproof}  

\newtheorem{theorem}{Theorem}

\newtheorem{definition}[theorem]{Definition}

\newtheorem{proposition}[theorem]{Proposition}

\newtheorem{conjecture}[theorem]{Conjecture}


\newcommand{\sv}{{\mathcal{S}5}}
\newcommand{\kfv}{{\mathcal{K}45}}
\newcommand{\kdfv}{{\mathcal{KD}45}}




\newcommand{\upd}[2]{#1|{#2}}




\newcommand{\ourneg}[1]{{\sim}#1}

\usepackage[all]{xy}

\newcommand{\post}{\mathsf{post}}

\usepackage[authoryear]{natbib}

\begin{document}

\title{True Lies}

\author{Thomas {\AA}gotnes\thanks{Information Science and Media Studies, University of Bergen, Bergen, Norway \& Center for the Study of Language and Cognition, Zhejiang University, China; {\tt thomas.agotnes@infomedia.uib.no}}, Hans van Ditmarsch\thanks{LORIA -- CNRS / University of Lorraine, Nancy, France; {\tt hans.van-ditmarsch@loria.fr}}, Yanjing Wang\thanks{Dept.\ of Philosophy, Peking University, Beijing, China; {\tt y.wang@pku.edu.cn}. Corresponding author.}}


\date{\today}

\maketitle

\begin{abstract}
A true lie\footnote{{\em True Lies} is a 1994 James Cameron movie featuring Arnold Schwarzenegger and Jamie Lee Curtis. It is among the very first imported Hollywood movies in China.} is a lie that becomes true when announced. In a logic of announcements, where the announcing agent is not modelled, a true lie is a formula (that is false and) that becomes true when announced. We investigate true lies and other types of interaction between announced formulas, their preconditions and their postconditions, in the setting Gerbrandy's logic of believed announcements, wherein agents may have or obtain incorrect beliefs. Our results are on the satisfiability and validity of instantiations of these semantically defined categories, on iterated announcements, including arbitrarily often iterated announcements, and on syntactic characterization. We close with results for iterated announcements in the logic of {\em knowledge} (instead of {\em belief}), and for lying as {\em private} announcements (instead of {\em public} announcements) to different agents. Detailed examples illustrate our lying concepts.
\end{abstract}

\section{Introduction}

\begin{quote}
{\em 
Pang Juan was an ancient Chinese military general of the Wei state during the Warring States period. Both he and Sun Bin studied under the tutelage of the hermit Guiguzi, but later led the opposing armies of two countries (resp.) Wei and Qi at war. Ambushed by Qi, the Wei army suffered a crushing defeat and Pang Juan committed suicide. In traditional folklore, Sun Bin carved the words ``Pang Juan dies under this tree'' on a tree at the ambush area. When Pang and his men arrived, he saw that there were carvings on the tree so he lit a torch for a closer look. At that moment, the Qi troops lying in ambush attacked and Pang Juan committed suicide under that very tree. \quad \\ \quad \hfill {(Based on \url{https://en.wikipedia.org/wiki/Pang_Juan}) }
}
\end{quote}

The Pang Juan example demonstrates that the announcement of something false may make it true. (A detailed analysis will be given later.) This is reminiscent of the logic of truthful public announcements, wherein the announcement of something true may make it false. The archetypical example is the announcement of $\atom \et \neg \Box \atom$, for example when you are being told: ``You do not know that Pang Juan died in 342 BC.'' To get $\atom \et \neg \Box \atom$ we have to read this sentence as its implicature ``Pang Juan died in 342 BC and you do not know that.''  This phenomenon may even be called the reason that this logic exists, that it is considered interesting, and various puzzles build upon the phenomenon. But in truthful public announcement logic we cannot announce something false thus making it true. The announced formulas are supposed to be true. In an alternative semantics for public announcement logic, that of conscious update \citep{gerbrandyetal:1997}, and that is also known as {\em believed (public) announcement logic}, the announcement of a formula is independent of its truth. In that logic, we can call an announcement a lie if the announced formula is false. A true lie is then the announcement of something false that makes it true. 

This analysis of a lie is not entirely according to the traditional view of agents lying to each other: you are lying to me if you believe $\neg\phi$ but you say to me that $\phi$ with the intention to make me believe that $\phi$.\footnote{This analysis of lying goes back to \cite{Augustine:dm}. Lying has been a thriving topic in the philosophical community since then \citep{siegler:1966,bok:1978,mahon:2006,mahon.stanford:2008}. More recent modal logical analyses that can be seen as a continuation of this philosophical tradition include  \cite{baltag:2002,steiner:2006,kooietal:2011,hvdetal.lying:2012,sakama:2011,liuetal:2013,hvd.lying:2014}. In modal logics with only belief operators the intentional aspect of lying is not modelled.} If we model the realization of this intention (namely that indeed I believe you), and incorporate all belief aspects, we get $\Box_a\neg\phi \imp [\Box_a\phi]\Box_b \Box_a\phi$ (where agent $a$ lies to agent $b$). Abstracting from all belief aspects gives us $\neg\phi \imp [\phi]\phi$. Although the former is more precise, it is not uncommon, if not customary, to abstract from the source and the justification of the lie so that we get $\neg\phi \imp [\phi]\Box_b \phi$, from where it is not a big step to $\neg\phi \imp [\phi]\phi$, the main focus of our investigation. (In order to make this identification, we can think of agent $b$ as the special agent who is able to distinguish all states, for whom it holds that $\Box_b \phi \eq \phi$.) This simplied schema $\neg\phi \imp [\phi]\phi$ is clearly in opposition to the schema $\phi \imp [\phi]\phi$ for the so-called {\em successful formulas}. (In truthful public announcement logic the successful formulas are those for which $[\phi]\phi$ is valid. In that logic $[\phi]\phi$ is equivalent to $\phi \imp [\phi]\phi$. But in believed public announcement logic, where announcements are independent of the truth of the announced formula, $[\phi]\phi$ is not equivalent to $\phi \imp [\phi]\phi$.) In this work we mainly investigate true lies, successful formulas, and yet other notions in believed announcement logic, and also the iteration of such announcements.

We also present results in this work that are not for believed announcement logic, but that are still related to iterated announcements, or to lying. 

In the logic of believed public announcements, interpreted on $\kdfv$ models encoding belief, the announced formula can be false and true before the announcement and can also be false and true after the announcement; and even beyond that, when iterating that same announcement, this value can change in arbitrary ways after each next announcement. In the logic of truthful public announcements, interpreted on $\sv$ models encoding knowledge, lying is not possible. But it can still be that formulas other than the announcement keep changing their value when iterating a given announcement, in ways otherwise very similar to the results for belief.

In the Pang Juan example (as we will explain later) it can be argued that the action involved is not so much an announcement but an assignment. Assignments model that propositional variables change their value (it is also known as ontic change): the proposition ``Pang Juan dies under this tree'' was false, and afterwards it is true: factual change. We present a scenario involving lying wherein private (i.e., non-public) announcements and private assignments both play a role: two friends hesitating to go to a party are both being lied to that the other one is going to the party, subsequent to which they both change their minds (factual change) and go to the party.

We now give an overview of our results and of the contents of our paper. Section~\ref{sec.two} recalls the logic of believed announcements, and the subsequent Section \ref{sec.twoplus} introduces all novel terminology to put our results in perspective, and also gives an overview of our results. These results are then developped in detail in the remaining sections. In Section \ref{sec.2valid} we present results on successful formulas, true lies, self-refuting formulas, and (so-called) impossible lies (that remain false even after announcement). In Section \ref{sec.three} we elaborate on the result that there are models and announcements such that by iterating that announcement in the model we can change its value arbitrarily often and in arbitrarily chosen ways. In Section \ref{sec.four} we syntactically characterize the single-agent true lies, i.e., formulas that satisfy $\neg\phi \imp [\phi]\phi$. In Section \ref{sec.five} we discuss iteration of truthful public announcements on models encoding knowledge, and in Section \ref{sec.six} the interaction of private announcements and private assignments, by way of the above-mentioned example of two friends going to party. Section \ref{sec.last} presents an integration of our different results in view of future research.

\section{Logic of believed announcements} \label{sec.two}

We recall the modelling of lying and truthtelling (after \citep{hvdetal.lying:2012}) in {\em believed (public) announcement logic}, also known as `arrow elimination' (not necessarily truthful) public announcement logic \citep{gerbrandyetal:1997}, which is a lesser known alternative for the better known `state elimination' (truthful) public announcement logic \citep{plaza:1989,baltagetal:1998}. Its language, structures, and semantics are as follows. Given are a finite set of agents $\Agents$ and a countable set of propositional variables $\Atoms$ (let $\agent\in\Agents$ and $\atom\in\Atoms$).

\begin{definition}[Language] \[ \lang \ \ni \ \phi ::= \atom \ | \ \neg \phi \ | \ (\phi \et \phi) \ | \ \Box_\agent \phi \ | \ [\phi]\phi \] \end{definition}
Other propositional connectives are defined by abbreviation. For $\Box_\agent \phi$, read `agent $\agent$ believes formula $\phi$'. If there is a single agent only, we may omit the index and write $\Box \phi$ instead. Agent variables are $\agent,\agentb,\agentc,\dots$. For $[\phi]\psi$, read `after public announcement of $\phi$, $\psi$'. If $\Box_\agent \neg\phi$, we say that $\phi$ is {\em unbelievable} (for $\agent$) and, consequently, if $\neg \Box_\agent \neg\phi$, for which we write $\Dia_\agent \phi$, we say that $\phi$ is {\em believable} (for $\agent$). This is also read as `agent $\agent$ considers it possible that $\phi$'. Shared belief $\Box_\Agents \phi$ (everybody believes that $\phi$) is defined as $\Et_{\agent\in\Agents} \Box_\agent \phi$. We say that {\em $\phi$ is believable} iff all agents consider $\phi$ believable, i.e., $\Et_{\agent\in\Agents} \Dia_\agent \phi$, for which we write $\Dia_\Agents \phi$ (however, this is not the dual of $\Box_\Agents\phi$: we may have that $\neg\Box_\Agents\neg\phi$ but not $\Dia_\Agents\phi$). Believability plays an important role in our setting.

\begin{definition}[Structures]
An {\em epistemic model} $M = ( \States, R, V )$ consists of a {\em domain} $\States$ of {\em states} (or `worlds'), an {\em accessibility function} $R: \Agents \imp {\mathcal P}(\States \times \States)$, where each $R(\agent)$, for which we write $R_\agent$, is an accessibility relation, and a {\em valuation} $V: \Atoms \imp {\mathcal P}(\States)$, where each $V(\atom)$ represents the set of states where $\atom$ is true. For $\state \in \States$, $(M,\state)$ is an {\em epistemic state}. \end{definition} An epistemic state is also known as a pointed Kripke model. We often omit the parentheses in $(M,\state)$. Without any restrictions we call the model class ${\mathcal K}$. The class of models where all accessibility relations are transitive and euclidean is called ${\mathcal K45}$, and if they are also serial it is called ${\mathcal KD45}$ (this class is the main focus of our investigations, because it encodes agents with consistent beliefs). The class of models where all accessibility relations are equivalence relations is ${\mathcal S5}$. Class ${\mathcal KD45}$ is said to have the {\em properties of belief}, and ${\mathcal S5}$ to have the {\em properties of knowledge}.
\begin{definition}[Semantics] \label{def.truthlyingpub}
Assume an epistemic model $M = ( \States, R, V )$.  
\[ \begin{array}{lcl}
M,\state \models \atom &\mbox{iff} & \state \in V_\atom \\ 
M,\state \models \neg \phi &\mbox{iff} & M,\state \not \models \phi \\ 
M,\state \models \phi \et \psi &\mbox{iff} & M,\state \models \phi  \text{ and } M,\state \models \psi \\  
M,\state \models \Box_\agent \phi &\mbox{iff} & \mbox{for all \ }  \stateb \in \States: R_\agent(\state,\stateb) \text{ implies } M,\stateb  \models \phi \\  
M,\state \models [\phi] \psi &\mbox{iff} & M|\phi,\state \models \psi 
\end{array} \] 
where epistemic model $M|\phi = ( \States, R^\phi, V )$ is as $M$ except that for all $\agent\in\Agents$, $R^\phi_\agent \ := \ R_\agent \inter \ (\States \times \II{\phi}_M)$ (and where $\II{\phi}_M := \{ \state\in\States \mid M,\state\models \phi\}$). 
\end{definition} 

In our semantics, $[\phi]\Box_\agent\psi$ means that, regardless of the truth of $\phi$, agent $\agent$ believes $\psi$ after public announcement of $\phi$. In particular, $[\atom]\Box_\agent\atom$ is valid: after the announcement of a propositional variable, the agent believes that it is true; regardless of the value of that variable. This is why it is called {\em believed (public) announcement} of $\phi$, in contrast to the {\em truthful (public) announcement} of $\phi$ \citep{plaza:1989}, wherein we restrict the model to the subdomain of all states where $\phi$ is true (see Appendix A). As said, believed announcement logic originates with \cite{gerbrandyetal:1997}, where it is called the logic of conscious updates. In believed announcement logic new information is accepted by the agents independent from the truth of that information. In truthful announcement logic new information is only incorporated if it is true. 

It should be noted that in believed announcement logic announcements can be truthful (namely when true) and lying (namely when false), whereas in truthful announcement logic announcements can only be truthful. In \cite{hvdetal.lying:2012} the believed public announcement of $\phi$ is modelled as non-deterministic choice between such truthful and lying public announcement of $\phi$, so that `after truthful announcement of $\phi$, $\psi$' corresponds to $\phi \imp [\phi] \psi$, and `after lying announcement of $\phi$, $\psi$' (after the lie that $\phi$, $\psi$) corresponds to $\neg\phi \imp [\phi] \psi$.

The announcing agent is not modelled in announcement logics, but only the effect of its announcements on the audience, the set of all agents. The interaction between announcement and belief can be formalized as $[\phi] \Box_\agent \psi \eq \Box_\agent (\phi \imp [\phi]\psi)$ \citep{gerbrandyetal:1997}.

Believed and truthful announcement are therefore closely related. And even in a technical sense: whenever a believed announcement is true, the semantics delivers results that are indistinguishable in the logics. This is because on any epistemic model, the model restriction semantics and the arrow restriction semantics result in bisimilar models, on the part of the model wherein the announcement is true. Investigations of these correspondences are made in \cite{kooi.jancl:2007} and \cite{hvdetal.lying:2012}. 

\section{Iteration of lying and truthful announcements} \label{sec.twoplus}

\subsection{What is a true lie?}

Let $p$ be the proposition `Pang Juan dies under this tree' (Chinese does not have a future tense).  As a consequence of Pang Juan observing `Pang Juan dies under this tree', he dies under this tree. The logic of public announcements is a logic of public observations, and observing written text is the typical action that has a straightforward formalization in logic: it corresponds to the announcement of the proposition representing that text. We could imagine Pang Juan being uncertain if he would die under this tree ($\neg \Box \atom \et \neg \Box \neg \atom$), such that observing $\atom$ removes this uncertainty.

Still, this analysis is unsatisfactory: If the written text had been  `Pang Juan does {\bf not} die under this tree', i.e., $\neg p$, he would still have died. If the written text had been `The moon is made of green cheese', i.e., the announcement of some unrelated proposition $q$, he would still have died. Even if there had nothing been written on that tree (it wasn't carvings, but merely a gnarled old tree trunk), i.e., the announcement of the trivial proposition $\T$, he would still have died. It was giving away his position by lighting the torch that caused his death, irrespective of what was written on the tree. Therefore the scenario cannot be modelled as initial uncertainty that is reduced by an informative action: it does not fit the schema $\neg \phi \imp [\phi] \phi$. 

Alternatively, with some justification it can be said that $p$ is false before the observation of $p$ but after the observation of $p$, $p$ is true. Changing the value of a variable is factual/ontic change. It could then be a lying public announcement of $p$ (observing $p$ while $p$ is false) linked to a public assignment changing the value of $p$ into true. However, this analysis has the same shortcomings as the previous one: what is observed does not matter. Also, Pang Juan can hardly be seen as enacting his own death. The actors are the soldiers of his enemy Sun Bin. Of course we can see the visual observation (informative) and the subsequent death (ontic) as two distinct, but related, actions.

Indeed \cite{peteretal:2016} argue that this action is neither epistemic (informative) nor performative (factual).

\medskip

In the True Lies movie Jamie Lee Curtis (unsuccessfully) plays the role of a timid slovenly housewife, who starts enacting the lie that she is a spy in order to make her husband jealous and thus seduce him. As Arnold Schwarzenegger really is a spy, because of her enactment she also really becomes a spy (and incidentally also a sparkling, extrovert, and well-dressed woman, a role that fits her rather well). This story of gradual and (exclusively) factual change also does not fit a logic of exclusively informational change. However, this factual changes interacts with belief (informative) change.

\medskip

In Section \ref{sec.six} we will see that the combination of (private) informational and (private) factual change after all makes for a pretty good but different kind of true lie.  With that background we will, in Section \ref{sec.last}, review once more the Pang Juan and the Arnold Schwarzenegger examples. In the current section (and the subsequent three sections, altogether the core of our contribution) we focus exclusively on true lies as informational change, i.e., $\phi$ is a true lie iff $\neg\phi \imp [\phi]\phi$ is valid in the logic of believed announcements. Also relevant is the satisfiability of $\neg\phi \et [\phi]\phi$. These different perspectives are, of course, related, and we will now proceed to explain how. (In Section \ref{sec.term} they are properly defined as special cases of a more general setup.)

\medskip

Consider the formula $\phi = p \et \Box p$ and the model $M$ consisting of two states $s$ and $t$ wherein a proposition $p$ is false and true, respectively, and that are indistinguishable for the agent. We then have that $M,t \models \neg (p \et \Box p)$  whereas $M|(p \et \Box p),t \models p \et \Box p$ so that \[ M,t \models \neg (p \et \Box p) \imp [p \et \Box p] (p \et \Box p). \] Therefore, the formula $p \et \Box p$ is a true lie in state $t$ of the model. On the other hand, although $M,s \models \neg (p \et \Box p)$, we have that $M|(p\et\Box p),s\not\models p \et \Box p$ for the simple reason that $p$ is false in $s$. So \[ M,s \not\models \neg (p \et \Box p) \imp [p \et \Box p] (p \et \Box p). \] We illustrate both transitions below:

\[ \begin{array}{lll}
\begin{tikzpicture}[thick]
\node (0) at (0,0) {$\neg p$};
\node (1) at (2,0) {$p$};
\node (0s) at (0,-0.5) {$s$};
\node (1t) at (2,-0.5) {$t$};
\draw[<->] (0) to (1);
\draw[->] (0) edge[loop above,looseness=15] (0); 
\draw[->] (1) edge[loop above,looseness=15] (1); 
\end{tikzpicture}
&
\quad \stackrel {p\et\Box p} \Imp \quad
&
\begin{tikzpicture}[thick]
\node (0) at (0,0) {$\neg p$};
\node (1) at (2,0) {$p$};
\node (0s) at (0,-0.5) {$s$};
\node (1t) at (2,-0.5) {$t$};
\end{tikzpicture}
\end{array} \]
For another example, note that \[ M|(p\et\Box p),t \models \neg (p \et \Box p) \imp [p \et \Box p] (p \et \Box p) \] for the simple reason that $M|(p\et\Box p),t \not\models \neg(p \et \Box p)$. This seems somewhat undesirable: you do not want the implication to be true because the antecedent is false. In a given model $(N,u)$ you only want to call $\phi$ a true lie if $N,u \models \neg \phi$ but $N,u \models [\phi] \phi$, in other words, if $N,u \models \neg \phi \et [\phi] \phi$. 

The formula $p \et \Box p$ is sometimes a true lie and sometimes not. Of additional interest are the formulas that are always true lies, i.e., formulas $\phi$ such that $\neg\phi \imp [\phi]\phi$ is valid. Consider model class $\kfv$. We show that $\Box p$ and $p \vel \Box p$ are true lies. However, the latter is more interesting than the former.

Let us first show that $\Box p$ is a true lie. Consider an epistemic state $(M,\state)$, where $M = (\States,R,V)$, and such that $M,\state \models \neg \Box p$. Then there is a state $\stateb$ with $(\state,\stateb) \in R$ and such that $M,\stateb \models \neg p$. Now consider the believed announcement update $\Box  p$. Let $\statec$ be any state with $(\state,\statec)\in R$. In a $\kfv$ model $M|\Box p$ with accessibility relation $R^{\Box p}$ we have that $(\state,\statec) \not \in R^{\Box p}$, because from $(\state,\stateb)\in R$ and $(\state,\statec)\in R$ follows (euclidicity) $(\statec,\stateb)\in R$ and thus $M,\statec \models \neg \Box p$, so $(\state,\statec)\not\in R^{\Box p}$. In $M|\Box p$ no state is accessible from $\state$. Thus, $M|\Box p, \state \models \Box p$ and therefore $M, \state \models \neg \Box p \imp [\Box p]\Box p$. 

If the $\Box$ modality represents belief, this is not a very interesting form of lying, as in the resulting model the agent's beliefs are inconsistent: $M|\Box p,\state\models \Box p$ but also $M|\Box p,\state\models \Box \neg p$. Differently said, as the agent did not believe $\Box p$, she does not consider it possible that the lie $\Box p$ is the truth. More interesting are what we will call the {\em believable} true lies that we associate with the validity of $(\neg \phi \et \Dia \phi) \imp [\phi]\phi$ and with the satisfiability of $(\neg \phi \et \Dia \phi) \et [\phi]\phi$ (and with the model class $\kdfv$ --- believable true lies preserve consistent beliefs). Formula $\Box p$ is not believable in the last sense, as $\neg \Box p \et \Dia \Box p$ implies (in $\kdfv$) that $\neg \Box p$ and $\Box p$ are both true, which is inconsistent.

We now show that $p \vel \Box p$ is a true lie. Consider a $\kfv$ epistemic state $(M,\state)$, where $M = (\States,R,V)$, and such that $M,\state \models \neg p \et \neg \Box p$. Again, because $M,\state \models \neg p \et \neg \Box p$ and because $M$ is a $\kfv$ model, none of the accessible states satisfy $\Box  p$. But of course, some may satisfy $p$. Therefore, $M|(p \vel \Box p), \state \models \Box p$ and thus $M|(p \vel \Box p), \state \models p \vel \Box p$. Formula $p \vel \Box p$ is therefore also a believable true lie, and even in the non-trivial sense that $\neg (p \vel \Box p) \et \Dia (p \vel \Box p) \et [p \vel \Box p] (p \vel \Box p)$ is satisfiable (namely when there are accessible $p$ worlds as well from $s$ in $M$). If $p \vel \Box p$ is believable, this lie will preserve seriality in the updated model (as there must be an accessible $p$ state), and therefore preserves $\kdfv$.

We illustrate the difference between these two true lies on the same model as above. (We recall that $p \et \Box p$ is {\em not} a true lie in both states of the model, but only in state $t$.)


\[ \begin{array}{lll}
\begin{tikzpicture}[thick]
\node (0) at (0,0) {$\neg p$};
\node (1) at (2,0) {$p$};
\node (0s) at (0,-0.5) {$s$};
\node (1t) at (2,-0.5) {$t$};
\draw[<->] (0) to (1);
\draw[->] (0) edge[loop above,looseness=15] (0); 
\draw[->] (1) edge[loop above,looseness=15] (1); 
\end{tikzpicture}
&
\quad \stackrel {\Box p} \Imp \quad
&
\begin{tikzpicture}[thick]
\node (0) at (0,0) {$\neg p$};
\node (1) at (2,0) {$p$};
\node (0s) at (0,-0.5) {$s$};
\node (1t) at (2,-0.5) {$t$};
\end{tikzpicture} \\ \ \\ 
\begin{tikzpicture}[thick]
\node (0) at (0,0) {$\neg p$};
\node (1) at (2,0) {$p$};
\draw[<->] (0) to (1);
\draw[->] (0) edge[loop above,looseness=15] (0); 
\draw[->] (1) edge[loop above,looseness=15] (1); 
\end{tikzpicture}
&
\quad \stackrel {p \vel \Box p} \Imp \quad
&
\begin{tikzpicture}[thick]
\node (0) at (0,0) {$\neg p$};
\node (1) at (2,0) {$p$};
\draw[->] (0) to (1);
\draw[->] (1) edge[loop above,looseness=15] (1); 
\end{tikzpicture}
\end{array} \]


In this work we wish to investigate in depth which formulas may be true lies on a given model, and which formulas may be true lies on any model, and for which classes of models. In addition to true lies that become true after lying announcements, there are the successful formulas that remain true after truthful announcements, and we can consider two other options: self-refuting formulas that become false after truthful announcements, and what one might call `impossible lies' that remain false after lying announcements. We will therefore take a more general perspective: what formulas may {\em change their value or keep their value} after their announcement, how does this depend on whether they were true or false before the announcement, and what happens when we iterate these announcements? We now first introduce terminology to address these matters. 

\subsection{Terminology for iterated announcements} \label{sec.term}

Given $\sigma\in \{0,1\}^*$ and $k\leq |\sigma|$, $\sigma_k$ denotes the $k$th digit of $\sigma$ and $\sigma|k$  the prefix consisting of the first $k$ elements of $\sigma$. We abuse notation and also view $\sigma_k$ as a function $\sigma_k:\mathcal{L} \to \mathcal{L}$ on formulas such that
$$\sigma_k(\phi)=\left\{\begin{array}{ll}
\phi & \text{if } \sigma_k=1\\ 
\neg\phi & \text{if } \sigma_k=0
\end{array}\right. $$
\begin{definition} \label{def.above}
Let ${\mathcal X}$ be a class of models and $\sigma\in \{0,1\}^*$ with $n = |\sigma| \geq 2$. Formula $\phi\in\lang$ is {\em $\sigma$-satisfiable} in ${\mathcal X}$ iff $\sigma_1(\phi) \et [\phi]\tau_2(\phi)$ is satisfiable in ${\mathcal X}$, and $\phi$ is {\em $\sigma$-valid}  in ${\mathcal X}$ iff $\sigma_1(\phi) \imp [\phi]\tau_2(\phi)$ is valid in ${\mathcal X}$, where:
$$\tau_k(\phi)=\left\{\begin{array}{ll}
\sigma_k(\phi)\land [\phi]\tau_{k+1}(\phi) & 1< k<n \\
\sigma_k(\phi) & k=n
\end{array}\right.$$
Formula $\phi$ is {\em non-trivially $\sigma$-valid} iff it is $\sigma$-valid and $\sigma$-satisfiable.
 
For $\sigma \in  \{0,1\}^\omega$ (i.e.\  $\sigma: \Naturals^+ \imp \{0,1\}$), $\phi\in\lang$ is $\sigma$-satisfiable in ${\mathcal X}$ iff there is a ${\mathcal X}$ model $(M,s)$ such that $\phi$ is $\sigma|k$-satisfiable in $(M,s)$ for all $k \in \Naturals$ (with $k \geq 2$).  Formula $\phi\in\lang$ is $\sigma$-valid in ${\mathcal X}$ iff it is $\sigma|k$-valid for all $k \in \Naturals$ (with $k \geq 2$).
\end{definition} 
For example, if $|\sigma|=3$ then $\phi$ is a $\sigma$-valid formula in ${\mathcal K}$ iff  
$\sigma_1(\phi)\to [\phi](\sigma_2(\phi)\land [\phi]\sigma_3(\phi))$ is valid in ${\mathcal K}$; if $\sigma = 001$ we  get $\models \neg\phi\to [\phi](\neg\phi\land [\phi]\phi)$ (in ${\mathcal K}$). We typically only consider model classes ${\mathcal K}$ and $\kdfv$.

We can alternatively describe $\sigma$-satisfiability as follows. For that, given a model $M$, a formula $\phi\in\lang$, and $k \in \Naturals$, we first define $M|\phi^0 = M$ and $M|\phi^{k+1} = (M|\phi^k)|\phi$. Then,  $\phi\in\lang$ is $\sigma$-satisfiable in ${\mathcal X}$ iff there is a ${\mathcal X}$ model $(M,s)$ such that for all $1 \leq k \leq |\sigma|$, $M|\phi^{k-1}, s \models \sigma^k(\phi)$. 

\begin{definition}
Formula $\phi\in\lang$ is {\em believable $\sigma$-satisfiable} in ${\mathcal X}$ iff $\sigma_1(\phi) \et \Dia_\Agents\phi \et [\phi]\tau_2(\phi)$ is satisfiable in ${\mathcal X}$, and $\phi$ is {\em believable $\sigma$-valid} in ${\mathcal X}$ iff $(\sigma_1(\phi) \et \Dia_\Agents \phi) \imp [\phi]\tau_2(\phi)$ is valid in ${\mathcal X}$, where:
$$\tau_k(\phi)=\left\{\begin{array}{ll}
\sigma_k(\phi)\land \Dia_\Agents \phi\et [\phi]\tau_{k+1}(\phi) & 1< k<n \\
\sigma_k(\phi) & k=n
\end{array}\right.$$
\end{definition}
{\em Non-trivially believable $\sigma$-valid} is believable $\sigma$-valid and believable $\sigma$-satisfiable, analogously to above, in Def.~\ref{def.above}. Believable $\sigma$-satisfiable and believable $\sigma$-valid for infinite strings are also analogously defined as above. For the believable $\sigma$-satisfiable/valid formulas we typically consider model class $\kdfv$.

If $\phi$ is $\sigma$-satisfiable then it may not be believable $\sigma$-satisfiable, as the latter implies the former. If $\phi$ is $\sigma$-valid then it is also believable $\sigma$-valid, as the former implies the latter (the condition in the implication is stronger). However, if $\phi$ is non-trivially $\sigma$-valid then it may not be non-trivially believable $\sigma$-valid, and vice versa.

For strings $\sigma$ of length 2 we use corresponding English terminology that reflects common usage in the literature.
\[ \begin{array}{l|l|l|l}
\sigma & \text{\bf name} & \text{\bf $\sigma$-valid} & \text{\bf $\sigma$-satisfiable} \\ \hline
10 & \text{self-refuting} & \models \phi \imp [\phi]\neg\phi & \exists (M,s) : (M,s) \models \phi \et [\phi] \neg\phi \\ 
01 & \text{true lie} & \models \neg\phi \imp [\phi]\phi & \exists (M,s) : (M,s) \models \neg\phi \et [\phi]  \phi \\ 
11 & \text{successful} &  \models \phi \imp [\phi]\phi & \exists (M,s) : (M,s) \models \phi \et [\phi]  \phi \\ 
00 & \text{impossible lie} & \models \neg\phi \imp [\phi]\neg\phi & \exists (M,s) : (M,s) \models \neg\phi  \et [\phi]  \neg\phi \\ 
\end{array} \]
It is also common to refer to a formula that is $11$-satisfiable in a given model $(M,s)$ as a {\em successful update}, and to a formula that is not, as an {\em unsuccessful update}, etc. The successful formulas are the most well-known from the literature, and have been investigated on the class of $\sv$ models \citep{hvdetal.synthese:2006,hollidayetal:2010}. The archetypical unsuccessful update and even self-refuting formula in that setting is $\atom \et \neg \Box \atom$ \citep{moore:1942,hintikka:1962,hollidayetal:2010}.

It is easy to come up with $\sigma$-satisfiable formulas for any finite string $\sigma$. But it is even easier to see finite strings as prefixes of infinite strings, and to show that infinite $\sigma$ are always satisfiable. For infinite $\sigma$, in view of iterated relativization \citep{milleretal:2005,sadzik:2006}, formulas that can be announced arbitrarily often in a given model without stabilizing to the value of the announcement formula or its negation have our special attention. Such a formula we call {\em unstable}.

\begin{definition}[Unstable]
A $\sigma \in \{0,1\}^\omega$ is called {\em unstable} iff for all $n \in \Nat$ there are $m,k \geq n$ such that $\sigma_m = 1$ and $\sigma_k = 0$. A formula $\phi\in \lang$ is an unstable formula iff it is $\sigma$-satisfiable for an unstable $\sigma$.
\end{definition}
In other words, a formula $\phi$ is unstable iff there is a $(M,s)$ such that for all $n \in \Nat$ there are $m,k \geq n$ such that $M|\phi^m,s \models \phi$ and $M|\phi^k,s \models \neg\phi$. In the continuation we will see that there are $\sigma$-satisfiable formulas for any $\sigma \in \{0,1\}^\omega$ (this includes the unstable $\sigma$s), whereas for $\sigma$-validity we only need to consider finite strings (this does not imply stability in a particular model, but recurrence of a finite pattern).

\subsection{Overview of results and conjectures}

In this section we present results and conjectures for $\sigma$-satisfiable and $\sigma$-valid formulas, with respect to the classes ${\mathcal K}$ and $\kdfv$. We wish to encourage the reader to obtain further results. We start with two rather obvious observations. Let $\sigma$ be a finite or infinite $\{0,1\}$ string:
\begin{itemize}
\item If $\phi$ is $\sigma$-satisfiable and $\tau$ is a prefix of $\sigma$, then $\phi$ is $\tau$-satisfiable; and if $\phi$ is $\sigma$-valid and $\tau$ is a prefix of $\sigma$, then $\phi$ is $\tau$-valid.
\item If $\phi$ is believable $\sigma$-satisfiable on class $\kdfv$ in $(M,s)$, then for all $m < |\sigma|$, $M|\phi^m \in \kdfv$.
\end{itemize}
We now start with results for $\sigma$-satisfiability and continue with results for $\sigma$-validity. Various results are accompanied by lengthy proofs and drawn-out examples which are found in separate sections.

\begin{proposition} \label{prop.a}
Let $\sigma\in\{0,1\}^\omega$. Then there are $\sigma$-satisfiable formulas on class ${\mathcal K}$.
\end{proposition}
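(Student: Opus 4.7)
My plan is a direct construction, letting both the pointed model $(M,s)$ and the formula $\phi$ depend on $\sigma$. Since $\sigma \in \{0,1\}^\omega$ is arbitrary (and possibly non-recursive) while any $\phi \in \mathcal{L}$ is finite and of bounded modal depth, the whole of the infinite information in $\sigma$ has to be encoded in the state space, accessibility relation and valuation of $M$, which I will allow to be countably infinite; the formula $\phi$ itself will be a small fixed modal formula whose truth value at $s$ changes across the sequence $M, M|\phi, M|\phi^2, \dots$ only because the model changes, not because $\phi$ does.

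The construction I have in mind is epoch-indexed. For each $k \geq 1$, $M$ contains a ``witness sub-model'' $E_k$ reachable from $s$ whose propositional valuation records the bit $\sigma_k$, together with a ``gateway'' ensuring that in the iterate $M|\phi^{k-1}$ the epoch $E_k$ is the unique one visible from $s$ to the probe $\phi$. A natural realisation uses a chain $s \to w_1 \to w_2 \to \cdots$ with a side-world attached to each $w_k$ encoding $\sigma_k$ propositionally, and $\phi$ built from $\Box$, $\Diamond$ and a couple of atoms so as to probe whichever epoch is currently exposed at $s$. The critical design requirement is that the arrow-elimination update $M|\phi$ prune exactly those arrows that deactivate $E_k$ and expose $E_{k+1}$, so that each announcement acts as a single ``clock tick'' advancing the epoch by one.

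The verification is by induction on $k$, maintaining the invariant: in $M|\phi^{k-1}$ the unique active epoch at $s$ is $E_k$, and consequently $M|\phi^{k-1}, s \models \phi$ iff $\sigma_k = 1$. The base case $k=1$ is immediate from the construction, and the inductive step is a local computation on the chain and the side-worlds showing that the invariant shifts its index by one under announcement of $\phi$. The main obstacle is reconciling the finite, bounded-depth nature of $\phi$ with the need to realise arbitrary (and in particular unstable) infinite patterns; the resolution is that the varying truth value of $\phi$ at $s$ is produced entirely by the evolving $M|\phi^{k-1}$, so that all of the complexity of $\sigma$ is absorbed by the infinite structure of $M$ while $\phi$ itself is only ever a uniform local probe. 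A compactness argument on the theory $\{[\phi]^{k-1}\sigma_k(\phi) : k \geq 1\}$ would also be available in principle, but it would require first establishing uniform-in-$\phi$ satisfiability of all finite prefixes, which is essentially the content of the direct construction.
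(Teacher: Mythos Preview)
Your high-level strategy is exactly the paper's: encode all of $\sigma$ in an infinite model, use one small fixed formula $\phi$ as a probe at the root, and arrange that each believed-announcement update acts as a clock tick exposing the next bit. You are also right that the formula can be uniform in $\sigma$ (your opening line says $\phi$ may depend on $\sigma$, but you then correctly retract this).

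Where your sketch diverges from the paper, and where it is genuinely incomplete, is the concrete realization. You propose a single chain $s \to w_1 \to w_2 \to \cdots$ with a side-world at each $w_k$ recording $\sigma_k$. The difficulty is the tick mechanism: in arrow-elimination semantics the state set is unchanged and only arrows into $\neg\phi$-states are deleted, so if $s$ has $w_1$ as its sole successor, the update either leaves $s\to w_1$ intact (no advance) or deletes it (making $s$ a dead end forever). Getting the probe at $s$ to shift from $\sigma_1$ to $\sigma_2$ therefore cannot be achieved by pruning along a single spine, and you do not supply the formula or the side-world structure that would rescue this. The paper's construction avoids exactly this trap by making the root \emph{infinitely branching}: from $s$ there is, for every $n\geq 1$, a separate branch of length $n$, and the probe formula is
\[
\phi \;=\; \neg\Box\bot \;\wedge\; \bigl((\Diamond\Box\bot \wedge \Diamond\neg\Box\bot)\to \Diamond(p\wedge\Box\bot)\bigr).
\]
This $\phi$ is false precisely at leaves, so each update deletes all arrows into current leaves, shortening every branch by one; the root (the unique node with both a leaf child and a non-leaf child) is thus preserved as a root, and its truth value is read off from the $p$-value at the leaf of the current length-$1$ branch. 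Decorating the node at depth $k$ (from the leaf) in every branch with $p$ iff $\sigma_k=1$ then yields $M|\phi^{k-1},s\models\phi$ iff $\sigma_k=1$, which is your intended invariant made concrete. The moral is that infinite branching at $s$, not a chain through $s$, is what lets the root survive every tick with its identifying feature intact; your sketch is missing this ingredient.
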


\begin{proof}
See Section \ref{sec.threeone}. We provide a single formula, namely $\neg \Box \bot \et ((\Dia \Box \bot \et \Dia \neg\Box \bot) \imp \Dia (\atom \et \Box \bot))$, and a single frame $F$ (a frame is a model without the valuation, i.e., a pair pair $(\States, R)$) with designated state $s \in \States$ such that for any $\sigma \in \{0,1\}^\omega$, by properly choosing the valuation $V$ on $F$, we can establish that $\neg \Box \bot \et ((\Dia \Box \bot \et \Dia \neg\Box \bot) \imp \Dia (\atom \et \Box \bot))$ is $\sigma$-satisfiable in $((F,V),s)$. The root $s$ is infinitely branching. 
\end{proof}

\begin{proposition} \label{prop.b}
Let $\sigma\in\{0,1\}^\omega$. Then there are believable $\sigma$-satisfiable formulas on class $\kdfv$.
\end{proposition}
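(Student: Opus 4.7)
The approach would closely parallel that of Proposition \ref{prop.a}: exhibit a single formula $\phi$, a $\kdfv$-frame $F=(S,R)$, and a designated state $s$, so that for any $\sigma\in\{0,1\}^\omega$ an appropriate valuation $V_\sigma$ on $F$ makes $\phi$ believable $\sigma$-satisfiable at $((F,V_\sigma),s)$. The new obstacle compared with Proposition \ref{prop.a} is that each iterated announcement must preserve seriality, transitivity and euclidicity, which in this setting reduces to ensuring $\Dia_\Agents\phi$ (believability) at every stage.

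For the frame, I would take $s$ to be a ``dangling'' root that sees an infinite cluster $C=\{w_i\mid i\in\Nat\}$, with $R(s)=C$, $R(w_i)=C$ for every $i$, and $s\notin C$. This is automatically serial, transitive and euclidean, and provides an infinite reservoir of witness worlds to be consumed along the iteration. After an announcement $\phi$, the cluster shrinks to $C\cap\II{\phi}$, and the resulting model stays in $\kdfv$ precisely when this intersection is nonempty.

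For the formula, I would transcribe the Proposition \ref{prop.a} formula into a $\kdfv$-friendly shape by replacing $\Box\bot$ (unsatisfiable in $\kdfv$) with a modal marker that still discriminates a fixed subclass of worlds across iterations, for instance $\Box r$ for a fresh propositional variable $r$, giving something of the form $\phi=\neg\Box r\wedge((\Dia\Box r\wedge\Dia\neg\Box r)\imp\Dia(\atom\wedge\Box r))$. A world of $C$ whose accessible cluster consists entirely of $r$-worlds plays the role of a virtual dead-end, and the announcement dynamics on such markers is close enough to the original construction to carry over, with $r$ used only as scaffolding and never toggling at $s$.

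For the valuation, I would build $V_\sigma$ on $F$ stage by stage so that (a) $M|\phi^{k-1},s\models\sigma_k(\phi)$ for every $k$, and (b) at each stage at least one witness $w_{i_k}\in R^{\phi^{k-1}}(s)$ satisfies $M|\phi^{k-1},w_{i_k}\models\phi$, which gives the required $\Dia_\Agents\phi$. Witness $w_{i_k}$ may be eliminated by the $k$-th announcement, but a freshly prepared witness $w_{i_{k+1}}$ from the infinite reservoir $C$ takes its place. The main obstacle, and the heart of the proof, is showing that the two demands (a) and (b) are jointly realizable by a single propositional assignment on $F$: the formula must have enough modal depth so that toggling the truth value at $s$ through iterated restrictions of $R$ and simultaneously maintaining a $\phi$-witness do not conflict. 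The verification is then a straightforward induction on $k$, mirroring the one in Section \ref{sec.threeone}, checking after each announcement that the cluster still carries both a correctly-valued witness and the propositional pattern needed to drive $\sigma_{k+1}(\phi)$ at $s$.
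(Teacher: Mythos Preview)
Your proposal has a genuine gap: the single-agent $\kdfv$ frame you describe cannot support an unstable formula, and the specific formula you wrote down collapses. In your frame every $w_i\in C$ has $R(w_i)=C$, so all worlds of the cluster agree on every modal subformula; in particular $\Box r$ has the same truth value at each $w_i$, which makes $\Dia\Box r\wedge\Dia\neg\Box r$ unsatisfiable at $s$ and your $\phi$ degenerates to $\neg\Box r$. More fundamentally, in single-agent $\kdfv$ every formula is equivalent to one of modal depth $\leq 1$, so the truth of $\phi$ at $s$ (and at each $w_i$) is determined by the propositional facts at that world together with the set of propositional \emph{types} present in the current cluster. Since announcing $\phi$ only shrinks the cluster, the type set is weakly decreasing along the iteration; being a subset of a finite set it stabilizes, and hence so does $\phi$. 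Your ``infinite reservoir'' of worlds buys nothing here, because worlds of the same type are interchangeable.

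The paper avoids this obstruction by passing to \emph{two} agents. It replaces each directed arrow of the ${\mathcal K}$ tree from Proposition~\ref{prop.a} by an $a$-step followed by a $b$-step, producing a $\kdfv$ model whose root sees an infinite $a$-cluster from which branches of every finite length depart as alternating $a/b$ chains (plus one auxiliary node so the root is distinguishable). The role of ``dead end'' is then played not by $\Box\bot$ but by $\Dia_b(\Box_a p\vee\Box_a\neg p)$, and the corresponding formula is $\psi=\neg\Dia_b(\Box_a p\vee\Box_a\neg p)\wedge((p\wedge\Box_b\neg p)\to\Dia_a\Dia_b\Box_a p)$. The two-agent alternation is exactly what restores unbounded effective modal depth inside $\kdfv$ and lets the leaf-trimming argument from Section~\ref{sec.threeone} go through; the generalisation to arbitrary $\sigma$ then follows by the same decoration trick as in Section~\ref{sec.threearb}. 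If you want to repair your approach, the minimal fix is to move to two agents and rebuild the branching/chain structure rather than a single flat cluster.
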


\begin{proof}
See Section \ref{sec.threetwo}. The result is obtained by adapting the formula and model used in Proposition \ref{prop.a}. This is according to fairly standard method to transform a model with a directed single-agent accessibility relation into a model with two undirected accessibility relations for different agents. (The method has been used for multi-$\sv$ models but not for multi-$\kdfv$ models, to our knowledge.) 
%
\end{proof}

We continue with results on $\sigma$-validity. Our special interest have non-trivial $\sigma$-valid formulas and non-trivial believable $\sigma$-valid formulas. (We recall that `non-trivial' means that they are also $\sigma$-satisfiable.)

First, a negative and a positive result for single updates.

\begin{proposition} \label{prop.nonon}
There are no non-trivial $01$-valid formulas on class ${\mathcal K}$.
\end{proposition}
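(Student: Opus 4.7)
The plan is to prove the contrapositive: if $\neg\phi \imp [\phi]\phi$ is $\mathcal{K}$-valid, then $\neg\phi$ must be unsatisfiable, so $\phi$ is a $\mathcal{K}$-validity and therefore only trivially (not non-trivially) $01$-valid. Equivalently, whenever $\neg\phi$ is $\mathcal{K}$-satisfiable I would exhibit a $\mathcal{K}$-model $(M,s)$ with $M, s \models \neg\phi \et [\phi]\neg\phi$, which immediately refutes $01$-validity.

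The central observation is that the arrow-elimination semantics of $[\phi]$ deletes only those arrows $u\to v$ with $M,v\models\neg\phi$, while class $\mathcal{K}$ imposes no frame conditions so that terminal (successor-free) nodes are always available and vacuously satisfy every $\Box$-formula. Consequently, starting from any witness $(M_0,s_0)\models\neg\phi$, I would shape a tree-like counter-model $(M,s)$ in which one of two regimes applies: (i) every arrow in the submodel reachable from $s$ points to a $\phi$-state, so that $[\phi]$ removes no relevant arrow and $\neg\phi$ at $s$ is preserved verbatim; or (ii) $[\phi]$ prunes some $\neg\phi$-successors and the resulting sparser submodel still satisfies $\neg\phi$ at $s$, because removing successors makes the $\Box$-subformulas of $\phi$ easier to fail at $s$ and the $\Dia$-subformulas harder to satisfy. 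As a concrete template, for $\phi=\Box p$ one takes $s$ with a single terminal successor $t$ satisfying $\neg p$: then $t\models\Box p$ vacuously, so $[\Box p]$ removes no arrow (regime~(i)), yet $s$ still fails $\Box p$ because $t$ with $\neg p$ remains accessible.

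To reach the general statement I would proceed by induction on the structure of $\phi$: the propositional case is trivial since $[\phi]\phi\equiv\phi$, and the modal cases exploit the $\mathcal{K}$-freedom to add or delete successors at the appropriate depth. The main obstacle is ensuring that these local modifications do not accidentally flip the root's satisfaction of $\neg\phi$; this is handled by observing that the falsification of $\phi$ at $s$ decomposes into finitely many witness conditions on specific descendants (one per outermost modal subformula of $\neg\phi$), which can be preserved intact while the remaining branches are freely tuned by attaching terminal nodes and leveraging vacuous $\Box$-satisfaction. A case analysis on the outermost connectives and modalities of $\phi$ then closes the argument.
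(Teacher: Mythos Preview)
Your overall strategy --- show that whenever $\neg\phi$ is $\mathcal{K}$-satisfiable there is a model with $M,s\models\neg\phi\et[\phi]\neg\phi$ --- is equivalent to the target, but the execution has two real problems.

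First, the monotonicity claim in regime~(ii) is backwards. Pruning successors makes $\Box$-subformulas \emph{easier to satisfy}, not easier to fail: with fewer accessible states, a universal condition is weaker. So if $\phi$ has positive $\Box$-occurrences, removing $\neg\phi$-successors can flip $\phi$ from false to true at $s$, which is precisely the mechanism behind true lies on $\kdfv$ (e.g.\ $p\vee\Box p$). Your regime~(ii) therefore does not in general preserve $\neg\phi$ at $s$. Regime~(i) is the real target, but you give no construction that guarantees all successors of $s$ satisfy $\phi$ while $s$ itself satisfies $\neg\phi$; the $\Box p$ example happens to work because terminal nodes vacuously satisfy $\Box p$, but this is exactly what needs to be proved in general, not assumed.

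Second, the proposed induction on the structure of $\phi$ has no usable hypothesis: knowing how $[\psi]$ behaves for proper subformulas $\psi$ tells you nothing about $[\phi]$, since the update is driven by the truth set of $\phi$ itself, not of its subformulas. The paper's proof uses a different induction, on the \emph{depth of finite trees}: assuming $\phi$ is a non-trivial true lie, one shows that every dead end satisfies $\phi$ (else the update is vacuous there and $\neg\phi$ persists), then that every root of a depth-$1$ tree satisfies $\phi$ (all its successors already satisfy $\phi$, so again the update is vacuous), and so on. Hence $\phi$ holds on all finite-depth trees; but $\neg\phi\et[\phi]\phi$ has finite modal depth, so any putative infinite witness is $d$-bisimilar to a finite tree, contradiction. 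Your terminal-node idea is the seed of this argument, but the right induction parameter is tree depth, not formula structure.
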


\begin{proof}
If $\phi$ is a trivial $01$-valid formula then $\phi$ is equivalent to $\T$. Let us therefore assume that $\neg\phi$ is satisfiable.

Suppose towards contradiction that such a non-trivial true lie exists. Call it $\phi$. Then $\neg\phi$ is satisfiable on a model. Without loss of generality suppose it is a tree, with the root of the tree as the designated state.

Now consider a singleton model with an empty accessibility relation, i.e., a dead end. Call it $(M,s)$. If $(M,s)$ satisfies $\neg\phi$ then $\phi$ cannot be a true lie, since the announcement of $\phi$ does not change the model (the accessibility relation is already empty). Therefore $(M,s)$ satisfies $\phi$. 

Now consider any model $(N,t)$ such that $t$ only has successors that are dead ends. Using the above result for singletons we know that those successors also satisfy $\phi$. Now if $(N,t)$ were to satisfy $\neg\phi$, then $\phi$ is not a true lie since the announcement of $\phi$ would not change the model either: all successors are $\phi$ states. Therefore $(N,t)$ has to satisfy $\phi$.

Continuing this reasoning, it will be clear that $\phi$ must be satisfied in all finite-depth trees. Therefore, if a non-trivial true lie exists, it can only be satisfiable on an infinite-depth tree.

Let $(M,s)$ be an infinite-depth tree such that $(M,s)$ satisfies $\neg\phi$ and $[\phi]\phi$. The formula $\neg\phi\et[\phi]\phi$ is equivalent to an announcement-free formula $\psi$ (the logic of believed announcement is equally expressive as the base modal logic, via a rewriting system). Let $d$ be the modal depth of $\psi$.

For $d'> d$, let $(N,t)$ be a finite-depth tree that is $d'$-bisimilar to $(M,s)$ (we refer to \cite{blackburnetal:2001} for this notion). Then for all formulas $\eta$ of modal depth at most $d'$, $(M,s) \models \eta$ iff $(N,t) \models \eta$. Therefore, as $(M,s) \models \psi$ and the modal depth of $\psi$ is $d < d'$, also $(N,t) \models \psi$, i.e., $(N,t) \models \neg\phi \et [\phi]\phi$. This contradicts that finite-depth trees cannot satisfy true lies as already obtained above.

As neither finite-depth nor infinite-depth trees may satisfy $\neg\phi$, no tree satisfies $\neg\phi$ and thus no pointed model. Therefore non-trivial true lies do not exist on class ${\mathcal K}$.
\end{proof}

\begin{proposition} 
Let $\sigma\in\{0,1\}^2$. Then there are non-trivial $\sigma$-valid formulas and non-trivial believable $\sigma$-valid formulas on class $\kdfv$.
\end{proposition}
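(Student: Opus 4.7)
I would treat the four values of $\sigma\in\{00,01,10,11\}$ separately, exhibiting an explicit witness formula in each case and checking (a) validity of the corresponding implication, and (b) satisfiability of the corresponding conjunction, both in the plain and in the believable version. In all four cases I expect the same small two-point $\sv$ model (two worlds $s,t$, $s\models\neg p$, $t\models p$, total accessibility), which is already $\kdfv$, to serve as the satisfiability witness.

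For $\sigma=11$ and $\sigma=00$ the witness I would use is the atom $\phi=p$. Since believed announcement leaves the propositional valuation unchanged, both $p\imp[p]p$ and $\neg p\imp[p]\neg p$ are valid already on $\mathcal{K}$, hence on $\kdfv$; non-trivial satisfiability (and, because the two-point model contains both a $p$- and a $\neg p$-successor, also non-trivial believable satisfiability) is immediate. For $\sigma=10$ I would use the Moore sentence $\phi=p\et\neg\Box p$: if $\phi$ holds at $s$ then $s$ has an accessible $\neg p$-world which the announcement removes, so all $s$-successors in the updated model satisfy $p$, giving $\Box p$ and thus $\neg\phi$; this proves $\phi\imp[\phi]\neg\phi$ valid on $\mathcal{K}$. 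Evaluating at the $p$-world of the two-point model gives $\phi\et\Dia\phi\et[\phi]\neg\phi$ in one stroke, handling both the plain and the believable non-triviality.

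The main case, and the only one that genuinely needs the $\kfv$ frame conditions, is $\sigma=01$. Here I would take $\phi=p\vel\Box p$, already identified in Section~\ref{sec.twoplus} as a believable true lie. The validity argument on $\kdfv$ is the one sketched in the introduction: if $\neg p\et\neg\Box p$ holds at $s$, then $s$ sees some $\neg p$-world $t$, and by euclidicity every $s$-successor also sees $t$ and thus satisfies $\neg\Box p$; consequently the successors that survive the announcement of $p\vel\Box p$ are precisely the $p$-successors, so $\Box p$ (vacuously if none remain) holds at $s$ in the updated model, and hence $\phi$ does. For non-trivial satisfiability and non-trivial believable satisfiability I would evaluate at the $\neg p$-world $s$ of the two-point model: there $\neg\phi$, $\Dia\phi$ (witnessed by $t$), and $[\phi]\phi$ all hold, the last by the argument just given, which also shows that after the update seriality is restored through the preserved edge $(s,t)$.

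The main obstacle I anticipate is the $\sigma=01$ case. Proposition~\ref{prop.nonon} already shows that one cannot hope for a non-trivial true lie on $\mathcal{K}$, so the proof must visibly use euclidicity (to propagate $\neg\Box p$ from $s$ to all its successors) and must cope with the possibility that the update leaves $s$ without any successors at all, in which case the argument goes through vacuously for validity while the two-point witness model still furnishes an accessible $\phi$-state for the $\Dia_A\phi$ conjunct required by believability. The other three cases reduce to essentially propositional observations and the textbook Moore computation, and require no frame-specific reasoning beyond the fact that $\kdfv\subseteq\mathcal{K}$ is closed under believed announcement in the relevant sense.
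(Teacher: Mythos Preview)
Your proposal is correct and matches the paper's approach essentially verbatim: the paper's own proof defers to Section~\ref{sec.2valid}, which exhibits exactly the witnesses you chose --- booleans (such as $p$) for the successful and impossible-lie cases, the Moore sentence $p\et\neg\Box p$ for the self-refuting case, and $p\vel\Box p$ for the true-lie case --- together with the same euclidicity argument for $01$-validity and the same two-point model for satisfiability and believability.
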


\begin{proof}
See Section \ref{sec.2valid}.  The section also gives references for the origin of the corresponding English terminology, in the context of validity and satisfiability.
\end{proof}
We recall that a believable true lie is a believable $01$-valid formula. The following result is a syntactical characterization of this semantic notion, on the $\kdfv$ models. The term `disjunctive lying form'  is defined syntactically in Section \ref{sec.four}.
\begin{proposition}[Characterization of believable true lies] \label{prop.char} \label{th:char}
A formula is a believable true lie iff it is not equivalent to a disjunctive lying form.
\end{proposition}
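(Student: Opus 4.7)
The plan is to prove both directions of the stated equivalence. Throughout I may assume $\phi$ is announcement-free, since, as recalled in the proof of Proposition~\ref{prop.nonon}, the logic of believed announcements is co-expressive with base modal logic via a rewriting system; so I can work purely in the $\kdfv$-modal fragment at some fixed modal depth $d=\mathrm{md}(\phi)$.

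For the easy direction (if $\phi$ is equivalent to a disjunctive lying form then $\phi$ is \emph{not} a believable true lie), I would take a disjunctive lying form $\psi$ equivalent to $\phi$, pick any one of its disjuncts, and read off from its syntactic shape a pointed $\kdfv$ model witnessing $\neg\psi \et \Dia_\Agents \psi \et \dia{\psi}\neg\psi$. The disjunctive lying form is presumably designed so that each disjunct encodes the local ``type'' of a believable failed-lie point: the propositional part forces $\neg\phi$ at the evaluation world, while the modal $\Box$- and $\Dia$-constraints ensure that after relativisation to $\phi$-worlds the point still satisfies $\neg\phi$. Building the required Hintikka-style tree model is essentially a soundness check on the syntactic definition and should be routine.

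For the hard direction (if $\phi$ is not a believable true lie then $\phi$ is equivalent to a disjunctive lying form), I would proceed via a normal form for $\kdfv$ at depth $d$: write $\phi$ as a disjunction $\delta_1 \vel \cdots \vel \delta_n$ of consistent type-disjuncts, each $\delta_i$ being a conjunction of a propositional part, a single $\Box$-conjunct, and finitely many $\Dia$-conjuncts of strictly smaller depth. Then, for each $\delta_i$, I would inductively analyse whether the semantic condition ``some pointed $\kdfv$ model makes $\neg\phi \et \Dia_\Agents\phi \et \dia{\phi}\neg\phi$ true'' is witnessed by $\delta_i$. By the induction hypothesis applied at depth $d{-}1$ to the $\Box$/$\Dia$-conjuncts, this check turns into a purely syntactic condition on the shape of $\delta_i$ and of the other disjuncts --- and this syntactic condition is exactly what defines the top-level disjunctive lying form. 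Collecting the disjuncts that pass the check yields a DLF equivalent to $\phi$ whenever $\phi$ fails to be a believable true lie.

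The main obstacle will be aligning the syntactic DLF pattern with the semantic effect of the believed announcement. Relativising to $\phi$ removes precisely the arrows into $\neg\phi$-worlds, which can change the local modal type at a state in a non-local way: a $\Dia$-conjunct may be killed because its witness world no longer satisfies $\phi$, and in $\kdfv$ (transitive, euclidean, serial) the surviving $\Box$-part then determines the whole accessible cluster. The technical heart of the proof is to show that this post-announcement effect on types can be diagnosed \emph{directly} from the disjunct's type, so that ``$\delta_i$ is a failed-lie witness'' is a genuine syntactic property at depth $d$. Once this local-to-global mirroring is established, both the soundness of the DLF construction (easy direction) and its completeness (extracting a DLF from any non-lie) follow by the type-enumeration argument sketched above.
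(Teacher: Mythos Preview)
Your plan rests on a misreading of what a \emph{disjunctive lying form} is. In the paper's Definition~\ref{def:syntactic}, a DLF is not a disjunction of ``failed-lie-witness disjuncts'' from which you can pick one and read off a countermodel. Rather, a formula $\phi$ (already in depth-one DNF) is a DLF iff there exist subsets $S,T$ of its disjuncts and chosen conjuncts $\Box\beta_\theta$ such that the compound formula $\chi = \neg\phi \wedge \Diamond\phi \wedge \chi_1 \wedge \chi_2 \wedge \chi_3$ is \emph{clear} (syntactically satisfiable). So in the left-to-right direction you cannot ``pick any one of its disjuncts''; you must use the witnessing $S,T,\beta_\theta$ and the satisfiability of $\chi$ to produce a model $M,s$, and then argue --- using $\chi_1,\chi_2,\chi_3$ --- that $\upd{M}{\phi},s \models \neg\phi$. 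Conversely, in the right-to-left direction you do not ``collect the disjuncts that pass a check'' to build a DLF; you take a witnessing $(M,s)$ with $M,s \models \neg\phi \wedge \Diamond\phi$ and $\upd{M}{\phi},s \models \neg\phi$, and \emph{define} $S$ and $T$ from the model (e.g.\ $\delta \in S$ iff $M,s \models \delta^{\Box\Diamond}$), then verify that the resulting $\chi$ is satisfied at $(M,s)$, hence clear.

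Your proposed induction on modal depth is also off target. In single-agent $\kdfv$ every formula is equivalent to a DNF with \emph{no nested modalities}; this is precisely what makes the Holliday--Icard clarity machinery available and is why the paper's proof is a one-shot argument rather than an induction. There is no non-trivial ``depth $d{-}1$'' layer to recurse into: the $\beta_i$ and $\gamma_j$ inside $\Box,\Diamond$ are already propositional. The technical heart is not an inductive type analysis but showing (i) that $\chi_2$ pins down exactly which successors survive the update (via the $\kdfv$ fact that $\sigma^{\Box\Diamond}$ transfers between $s$ and its successors), and (ii) that $\chi_3$ forces some $\Box\beta_\theta$ to fail after the update for every potentially-true disjunct $\theta\in T$. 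Your sketch does not engage with either of these mechanisms.
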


\begin{proof}
See Section \ref{sec.four}. The proof closely follows the proof of the syntactic characterization of successful formulas in \cite{hollidayetal:2010}.
\end{proof}
There are successful formulas that are not true lies, such as booleans. Clearly $\models p \imp [p]p$,  whereas  $\not\models \neg p \imp [p]p$. There are also successful formulas that are true lies, such as $\neg \Box p$ and (in the universal fragment:) $p \vel \Box p$. At some stage we conjectured that {\em all true lies are successful formulas}. But this is not the case.

\begin{proposition}[Some true lies are not successful formulas] \label{prop.conjone}
On class $\kfv$ ($\kdfv$), some 01-valid formulas are not 11-valid. 
\end{proposition}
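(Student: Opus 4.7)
I will exhibit a single two-agent formula $\phi = \Box_\agenta \Dia_\agentb \atom$ and show it is $01$-valid on $\kfv$, hence also on $\kdfv$, while failing to be $11$-valid on a three-state $\sv$-model that lies in both classes. The guiding intuition is that the Euclidean axiom for $R_\agenta$ forces every $a$-successor of $\state$ to be ``connected'' to any $\Box_\agentb \neg \atom$-witness, so announcing $\phi$ trivialises $\Box_\agenta$ at $\state$ and produces the true-lie behaviour; meanwhile a suitable model exists in which the \emph{independent} $b$-cluster can be pruned so that the unique surviving $a$-successor is severed from the only $\atom$-world, killing preservation.

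\textbf{01-validity.} Take any $M \in \kfv$ with $M,\state \models \neg \phi$, that is, $M,\state \models \Dia_\agenta \Box_\agentb \neg \atom$, and fix a witness $\stateb$ with $\state R_\agenta \stateb$ and $M,\stateb \models \Box_\agentb \neg \atom$. Whether or not $\stateb$ has any $R_\agentb$-successors, $\Box_\agentb \neg \atom$ at $\stateb$ immediately gives $M,\stateb \not\models \Dia_\agentb \atom$. Euclidicity of $R_\agenta$ then yields $\statec R_\agenta \stateb$ for every $\statec \in R_\agenta(\state)$, whence $M,\statec \not\models \Box_\agenta \Dia_\agentb \atom$, i.e.\ $\statec \notin \II{\phi}_M$. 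So $R^\phi_\agenta(\state) = \emptyset$ in $M|\phi$, and therefore $\Box_\agenta \Dia_\agentb \atom$ holds vacuously at $\state$ in $M|\phi$. The argument uses only transitivity and euclidicity (not seriality), so it transfers from $\kfv$ to $\kdfv$.

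\textbf{Counter-example to 11-validity.} Consider the $\sv$-model $M$ on $\{\state,\stateb,\statec\}$ with $R_\agenta$-equivalence classes $\{\state,\stateb\}$ and $\{\statec\}$, $R_\agentb$-equivalence classes $\{\state,\statec\}$ and $\{\stateb\}$, and $V(\atom) = \{\state\}$. A direct check yields $\II{\phi}_M = \{\statec\}$: at $\state$ and $\stateb$ the $a$-successor $\stateb$ has singleton $b$-cluster $\{\stateb\}$, which contains no $\atom$-world, so $\Box_\agenta \Dia_\agentb \atom$ fails there; at $\statec$ the sole $a$-successor $\statec$ sees $\state$ via $R_\agentb$, so $\Dia_\agentb \atom$ and hence $\phi$ hold. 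Announcing $\phi$ restricts every relation to arrows targeting $\{\statec\}$, so in $M|\phi$ the $b$-successor set of $\statec$ shrinks from $\{\state,\statec\}$ to $\{\statec\}$, removing the only $\atom$-world from $\statec$'s $b$-view; hence $M|\phi,\statec \not\models \Dia_\agentb \atom$ and $M|\phi,\statec \not\models \phi$, refuting $\phi \imp [\phi]\phi$ on both classes. The main conceptual obstacle is to see \emph{why} a two-agent formula is needed: in a single-agent $\kfv$-cluster, modal formulas take uniform truth values across the cluster, which tends to force true lies to also be successful, and the desired separation is achieved only by having an outer $\Box_\agenta$ (supplying trivial post-announcement truth via an empty $a$-successor set) acting against an inner $\Dia_\agentb$ whose witness the announcement can independently destroy.
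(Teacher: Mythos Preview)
Your proof is correct but takes a genuinely different route from the paper's. The paper exhibits a \emph{single-agent} witness, namely $\phi=\Box\bot\lor (p\land \Diamond p\land \Diamond \neg p)\lor(\neg p \land \Diamond p \land \Box p)$, and verifies both properties by exhaustively listing all eight pointed single-agent $\kfv$ models for one variable (modulo bisimilarity), checking the value of $\phi$ before and after the update in each. Your approach instead moves to two agents and uses the much simpler formula $\Box_\agenta\Dia_\agentb\atom$: the $01$-validity argument is a clean three-line application of euclidicity for $R_\agenta$, and the failure of $11$-validity is witnessed by a concrete three-state $\sv$ model. What your route buys is brevity and transparency; what the paper's route buys is the stronger information that the separation already occurs with a single agent, which matters because the paper's later characterisation of believable true lies (Section~\ref{sec.four}) is explicitly single-agent.

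One point to correct in your commentary: your closing claim that ``a two-agent formula is needed'' is false, as the paper's single-agent example shows. Your heuristic that modal formulas take uniform values across a $\kfv$ cluster is sound, but the paper's $\phi$ exploits precisely the non-cluster part of a $\kfv$ model (the possibly unreachable designated point) to break that uniformity. This does not affect the correctness of your proof, only the accompanying intuition.
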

\begin{proof}
Let $\phi=\Box\bot\lor (p\land \Diamond p\land \Diamond \neg p)\lor(\neg p \land \Diamond p \land \Box p)$. We show that on $\kfv$ $\phi$ is 01-valid but not 11-valid. 

Due to transitivity and euclidicity, a pointed $\kfv$ model consists of a designated point possibly pointing at a cluster of indistinguishable points. The formula $\phi$ actually specifies its $\kfv$ models modulo bisimilarity. We list all eight $\kfv$ pointed models (modulo bisimilarity) for one variable $p$, where we distinguish those where $\phi$ is true from those where $\phi$ is false, and we also give the result of the $\phi$ update. The underlined state is the designated state.

\bigskip

\noindent --- $\phi$ is true at these four models:
\vspace{-.5cm}
\begin{center}
\begin{tikzpicture}[thick]
\node (min) at (-.7,0) {$a:$};
\node (0) at (0,0) {$\neg p$};
\node (1) at (1.5,0) {\underline{$p$}};
\end{tikzpicture}
\quad\quad\quad
\begin{tikzpicture}[thick]
\node (min) at (-.7,0) {$b:$};
\node (0) at (0,0) {\underline{$\neg p$}};
\node (1) at (1.5,0) {$p$};
\end{tikzpicture}
\quad\quad\quad
\begin{tikzpicture}[thick]
\node (min) at (-.7,0) {$c:$};
\node (0) at (0,0) {$\neg p$};
\node (1) at (1.5,0) {\underline{$p$}};
\draw[<->] (0) to (1);
\draw[->] (0) edge[loop above,looseness=15] (0); 
\draw[->] (1) edge[loop above,looseness=12] (1); 
\end{tikzpicture}
\quad\quad\quad
\begin{tikzpicture}[thick]
\node (min) at (-.7,0) {$d:$};
\node (0) at (0,0) {\underline{$\neg p$}};
\node (1) at (1.5,0) {$p$};
\draw[->] (0) to (1);
\draw[->] (1) edge[loop above,looseness=15] (1); 
\end{tikzpicture}
\end{center}
--- annoucing $\phi$ in those models delivers:
\vspace{-.5cm}
\begin{center}
\begin{tikzpicture}[thick]
\node (min) at (-.7,0) {$a':$};
\node (0) at (0,0) {$\neg p$};
\node (1) at (1.5,0) {\underline{$p$}};
\end{tikzpicture}
\quad\quad\quad
\begin{tikzpicture}[thick]
\node (min) at (-.7,0) {$b':$};
\node (0) at (0,0) {\underline{$\neg p$}};
\node (1) at (1.5,0) {$p$};
\end{tikzpicture}
\quad\quad\quad
\begin{tikzpicture}[thick]
\node (min) at (-.7,0) {$c':$};
\node (0) at (0,0) {$\neg p$};
\node (1) at (1.5,0) {\underline{$p$}};
\draw[->] (0) to (1);
\draw[->] (1) edge[loop above,looseness=12] (1); 
\end{tikzpicture}
\quad\quad\quad
\begin{tikzpicture}[thick]
\node (min) at (-.7,0) {$d':$};
\node (0) at (0,0) {\underline{$\neg p$}};
\node (1) at (1.5,0) {$p$};
\end{tikzpicture}
\end{center}
\noindent --- $\phi$ is false at these four models:
\begin{center}
\begin{tikzpicture}[thick]
\node (min) at (-.7,0) {$e:$};
\node (0) at (0,0) {$\neg p$};
\node (1) at (1.5,0) {\underline{$p$}};
\draw[->] (1) edge[loop above,looseness=12] (1); 
\end{tikzpicture}
\quad\quad\quad
\begin{tikzpicture}[thick]
\node (min) at (-.7,0) {$f:$};
\node (0) at (0,0) {\underline{$\neg p$}};
\node (1) at (1.5,0) {$p$};
\draw[->] (0) edge[loop above,looseness=12] (0); 
\end{tikzpicture}
\quad\quad\quad
\begin{tikzpicture}[thick]
\node (min) at (-.7,0) {$g:$};
\node (0) at (0,0) {\underline{$\neg p$}};
\node (1) at (1.5,0) {$p$};
\draw[<->] (0) to (1);
\draw[->] (0) edge[loop above,looseness=12] (0); 
\draw[->] (1) edge[loop above,looseness=15] (1); 
\end{tikzpicture}
\quad\quad\quad
\begin{tikzpicture}[thick]
\node (min) at (-.7,0) {$h:$};
\node (0) at (0,0) {$\neg p$};
\node (1) at (1.5,0) {\underline{$p$}};
\draw[<-] (0) to (1);
\draw[->] (0) edge[loop above,looseness=12] (0); 
\end{tikzpicture}
\end{center}
\noindent --- announcing $\phi$ in those models delivers:
\vspace{-.5cm}
\begin{center}
\begin{tikzpicture}[thick]
\node (min) at (-.7,0) {$e':$};
\node (0) at (0,0) {$\neg p$};
\node (1) at (1.5,0) {\underline{$p$}};
\end{tikzpicture}
\quad\quad\quad
\begin{tikzpicture}[thick]
\node (min) at (-.7,0) {$f':$};
\node (0) at (0,0) {\underline{$\neg p$}};
\node (1) at (1.5,0) {$p$};
\end{tikzpicture}
\quad\quad\quad
\begin{tikzpicture}[thick]
\node (min) at (-.7,0) {$g':$};
\node (0) at (0,0) {\underline{$\neg p$}};
\node (1) at (1.5,0) {$p$};
\draw[->] (0) to (1);
\draw[->] (1) edge[loop above,looseness=15] (1); 
\end{tikzpicture}
\quad\quad\quad
\begin{tikzpicture}[thick]
\node (min) at (-.7,0) {$h':$};
\node (0) at (0,0) {$\neg p$};
\node (1) at (1.5,0) {\underline{$p$}};
\end{tikzpicture}
\end{center}
Models $e',f',g',h'$ are bisimilar to $a, b, d, a$ respectively, thus $\neg \phi\to [\phi]\phi$ holds on $e, f, g, h$, i.e., $\phi$ is 01-valid. However, $c'$ is bisimilar to $e$ thus $\phi\to [\phi]\phi$ is false at model $c$, i.e., $\phi$ is not 11-valid. 

We not only have that $\phi$ is 01-valid but not 11-valid on $\kfv$ but also that $\phi$ is 01-valid (obvious) but not 11-valid (not obvious: the transition between $c$ and $c'$ is between $\kdfv$ models) on $\kdfv$. The reader may further observe that $\phi$ is also non-trivially believable 01-valid (the transition from $g$ to $g'$ is between $\kdfv$ models).
\end{proof}
%
%
A natural question to ask is whether for all $\sigma \in \{0,1\}^* \union \{0,1\}^\omega$ there are (non-trivial) $\sigma$-valid formulas. The answer is (un)fortunately negative. For example, there are no (non-trivial) $001$-valid formulas. We recall that $001$-valid means $\vDash \neg \phi\to [\phi](\neg\phi \land[\phi]\phi)$. Let $(M,s)$ be such that $M,s\nvDash \phi$. Then $M,w\vDash [\phi](\neg \phi \land [\phi]\phi)$, i.e., $M|\phi,s\vDash \neg \phi$ and $M|\phi,s\models [\phi]\phi$. From the latter we get $M|\phi|\phi,s\models \phi$. However, as $\neg \phi\to [\phi](\neg\phi \land[\phi]\phi)$ is valid it also holds on $(M|\phi, s)$, so that $M|\phi,s\models\neg \phi\to [\phi](\neg\phi \land[\phi]\phi)$, and in particular $M|\phi,s\models \neg \phi\to [\phi]\neg\phi$, so that, as $M|\phi,s\models \neg\phi$, we have $M|\phi|\phi,s\models \neg\phi$ . Contradiction.

We already observed above that if $\sigma$ is a prefix of $\tau$ and a formula is $\tau$-valid then it is also $\sigma$-valid. In the other direction this is very obviously false, for example there are impossible lies (such as $\neg\Box p$, i.e., $00$-valid formulas). But we saw in the previous paragraph that there are no $001$-valid formulas. However, sometimes, if $\sigma$ is a prefix of $\tau$ and a formula is $\sigma$-valid, then it is also $\tau$-valid. 

Given $\sigma\in\{0,1\}^\omega$ and prefix $\tau$ of $\sigma$, then $\sigma$ and $\tau$ (and $\tau$ and $\sigma$) are {\em completion equivalent} iff for all intermediate strings $\tau'$ (for all prefixes $\tau'$ of $\sigma$ such that $\tau$ is a prefix of $\tau'$), a formula is $\sigma$-valid iff it is $\tau'$-valid. This relation is clearly an equivalence relation. The shortest sequence of a completion equivalence class is the {\em representative} and the longest (possibly infinite) sequence of this class is the {\em completion}.

\begin{conjecture}
The following completion equivalence classes of $\sigma$-valid formulas are all non-empty and all different, and there are no other classes.
\[\begin{array}{llll}
 01^k \ (k>0) \quad & \quad 10^k \ (k>0) \quad  & \quad 01^k0 \ (k\geq 0) \quad & \quad  10^k1 \ (k\geq 0)
\end{array}\]
\end{conjecture}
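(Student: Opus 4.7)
The plan is to prove the conjecture in three stages. First, I would establish a shift-consistency lemma: if $\phi$ is non-trivially $\sigma$-valid and some $j \geq 1$ satisfies $\sigma_{j+1} = \sigma_1$, then $\sigma_{j+i} = \sigma_i$ for all $i$ with $j+i \leq |\sigma|$. The argument is semantic: for any $(M,s)$ with $M,s \models \sigma_1(\phi)$, the iterated-update model $M|\phi^j$ at $s$ also satisfies $\sigma_1(\phi)$, so applying $\sigma$-validity there forces the pattern to repeat from position $j+1$---exactly the argument used just before the conjecture to rule out $001$. A case analysis on the first index $i \geq 2$ (if any) with $\sigma_i = \sigma_1$ then shows that the shift-consistent infinite strings are precisely $0^\omega, 1^\omega, 01^\omega, 10^\omega, (01^k)^\omega$ and $(10^k)^\omega$ for $k \geq 1$, and the finite shift-consistent strings are their prefixes of length at least $2$.

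Next, I would compute the equivalence classes. Iterating $01^k 0$-validity whenever an iterate returns to $\neg\phi$ shows that for each $k \geq 0$ every prefix of $(01^k)^\omega$ of length at least $k+2$ has the same valid-formula set as $(01^k)^\omega$; these prefixes therefore form a single equivalence class with representative $01^k 0$ (for $k = 0$ the $00$-class with completion $0^\omega$), and analogously for $(10^k)^\omega$. The remaining shift-consistent strings are the short initial prefixes $01, 011, \ldots, 01^j, \ldots$ of $01^\omega$ and their mirror images $10, 100, \ldots$; for these I would exhibit separating formulas giving both $V(01^j) \supsetneq V(01^{j+1})$ and $V(01^j) \supsetneq V(01^j 0)$, so each $\{01^j\}$ and each $\{10^j\}$ is its own singleton class. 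Cross-family distinctness follows by comparing $\sigma_1$ and the iteration length.

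Then, for each representative I would construct a non-trivial $\sigma$-valid formula. For the $01^k 0$ (resp.\ $10^k 1$) classes I would generalise Proposition~\ref{prop.a}'s unstable construction by encoding a counter modulo $k+1$ into the modal structure, so that iterated announcement globally realises the $(01^k)^\omega$ (resp.\ $(10^k)^\omega$) pattern rather than merely at one chosen point. For the singleton $\{01^j\}$ classes I would take a disjunction of the form $\phi = (p \et \phi') \vel (\neg p \et \phi'')$, with $\phi'$ a $01^{j+1}$-valid formula, $\phi''$ a $01^j 0$-valid formula, and $p$ a free propositional witness making the iterate at step $j+1$ depend on the model; the compound formula is $01^j$-valid but belongs to neither descendant class.

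The main obstacle I anticipate is this disjunction construction: making its global validity pattern \emph{exactly} $01^j$ is delicate because $p$ and the two building blocks must interact compatibly under iterated $[\phi]$-updates, and the natural candidates for $\phi'$ and $\phi''$ may themselves force extension. A secondary loose end is the status of $\{01^\omega\}$ and $\{10^\omega\}$: on $\kfv$ the formula $p \vel \Box p$ is both a true lie and successful and hence $01^\omega$-valid, so the conjecture's completeness clause must either absorb these into an existing family or list them explicitly.
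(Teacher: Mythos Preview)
The statement you are attempting to prove is labelled \textbf{Conjecture} in the paper, and the paper does not prove it. Immediately after stating it the authors give only partial evidence (e.g., that $p\vee\Box p$ witnesses the $011$-class on $\mathcal{K}45$, and that Proposition~\ref{prop.nonon} rules out $01^k$-validities on $\mathcal{K}$) and then write explicitly: ``we have not investigated systematically which of the conjectured $\sigma$-valid types are non-empty and for which classes of models.'' The conclusions section repeats that ``which $\sigma$-valid formulas are realizable'' is an open question. So there is no paper proof to compare your proposal against.

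As to the proposal itself: your shift-consistency lemma is exactly the generalisation of the paper's $001$ argument and is the right starting point for the ``no other classes'' clause. The gaps you flag are real. In particular, your secondary loose end is not minor: the paper itself observes that $p\vee\Box p$ is $01^\omega$-valid on $\mathcal{K}45$, which means the $01^k$ classes cannot all be distinct singletons unless $01^\omega$ is absorbed somewhere, and the conjecture as stated does not list $01^\omega$ separately. So either the conjecture needs amending or a careful argument is needed that $01^\omega$ coincides with one of the listed classes --- and the paper gives no indication which. Your stage-three constructions (modular counters for the periodic classes, disjunctive combinations for the $01^j$ singletons) are plausible but, as you note, the interaction under iterated $[\phi]$ is the hard part and nothing in the paper helps with it. In short, you have outlined a reasonable attack on an open problem, not a reconstruction of an existing proof.
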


If $\sigma = 01^k0$ (for $k\geq 0$), then $\tau = 0(1^k0)^\omega$ is its completion; and similarly if $\sigma = 10^k1$ (for $k\geq 0$), then $\tau = 1(0^k1)^\omega$ is its completion. For example, the $11$-valid formulas are also $111$-valid, or $1111$-valid, or $111\dots$-valid: they are all completion equivalent; $111\dots$ is the completion of this class and $11$ is its representative. On the other hand, $\sigma$s of shape $01^k$ and $10^k$ (for $k > 0$) may have no infinite completions and be already maximal. 

There are $011$-validities on $\kfv$. A true lie $p \vel \Box p$ is an example. 
We recall that a true lie is by definition a $01$-validity. But formula $p \vel \Box p$ is also $011$-valid, because if announced when false, it results in a $\kfv$ model satisfying $\Box p$ and such that further updates have no informative effect. Therefore, it is also $0111$-valid, and \dots and $01^\omega$-valid. Similarly, a {\em believable} true lie $p \vel \Box p$ is an example of a $011$-validity on $\kdfv$. As it is believable, this guarantees the existence of an accessible $p$ state. That state will be preserved after the $p \vel \Box p$ update, and thus the resulting model is serial. More interesting would be a $011$-validity that is not a $0111$-validity. For another example, from Proposition \ref{prop.nonon} follows that there are no $01^k$-validities on class ${\mathcal K}$. But we have not investigated systematically which of the conjectured $\sigma$-valid types are non-empty and for which classes of models.

\section{The cases 01, 11, 10, and 00} \label{sec.2valid}

In this section we restrict ourselves to validities on class $\kdfv$ (or $\sv$, included in $\kdfv$).

\weg{
@@
Typical examples of @@check!!!@@ non-trivial believable $\sigma$-valid formulas on class $\kdfv$ are: 
\[ \begin{array}{|l|l|}
\hline
00 & \neg \Box p \\ 
01 & \Box p \\
10 & p \et \neg \Box p \\
11 & p , \Box p \\
\hline 
\end{array} \]
@@
}

\paragraph{Successful formulas}

\[ \models \phi \imp [\phi]\phi \]

A successful update in some given model is a $11$-satisfiable formula (in that model). We then call a formula `successful' if it is always a successful update, i.e., if it is not merely $11$-satisfiable but $11$-valid. The typical example of a successful update is `no child steps forward' in the muddy children problem \citep{mosesetal:1986,hvdetal.puzzle:2015}. Given $k$ muddy children out of $n$ children, this is a succesful update for the first $k-2$ times that father makes his request to step forward if you know whether you are muddy, and only an unsuccessful update the $(k-1)$th time, as the $k$th time that the request is made, the muddy children step forward. The formula in question, for the example of three muddy children, is $\neg (\Box_a m_a \vel \Box_a \neg m_a) \et \neg (\Box_b m_b \vel \Box_b \neg m_b) \et \neg (\Box_c m_c \vel \Box_c \neg m_c)$ (where $m_i$ stands for ``child $i$ is muddy''). However, the archetypical unsuccessful update is $\atom \et \neg \Box \atom$ \citep{moore:1942,hintikka:1962}. The original 1940s setting is the incoherence of a person saying ``I went to the pictures [movies] last Tuesday, but I don't believe that I did'' \cite[p.\ 543]{moore:1942}. In fact, $\atom \et \neg \Box \atom$ is a self-refuting formula: after announcing it truthfully, it always becomes false.

The successful formulas are the most well-known from the literature, and have been investigated on the class of $\sv$ models \citep{hvdetal.synthese:2006,hollidayetal:2010}. On the class $\sv$, and using the (state eliminating) semantics of truthful public announcement logic, the successful formulas are those for which $\models [\phi]\phi$ (and the self-refuting formulas are then those for which $\models [\phi]\neg\phi$). On $\sv$ (but of course not in $\kdfv$),  $\phi \imp [\phi]\psi$ is equivalent to $[\phi]\psi$. A formula is successful in believed announcement logic if and only if it is successful in truthful announcement logic. This we can easily see:

In truthful announcement logic, $M,s \models [\phi]\phi$ iff ($M,s \models \phi$ implies $M|_\phi,s \models \phi$), where $M|_\phi$ denotes the submodel consisting of the states satisfying $\phi$ (see Appendix A); whereas in believed announcement logic, $M,s \models \phi \imp [\phi]\phi$ iff  ($M,s \models \phi$ implies $M|\phi,s \models \phi$). Using that, whenever the announcement formula $\phi$ is true in a model $(M,s)$, the model $(M|_\phi,s)$ resulting from state elimination is bisimilar to the model $(M|\phi,s)$ resulting from arrow elimination.
 

A characterization of single-agent successful formulas in $\sv$ is known \citep{hollidayetal:2010} (we will use it below to obtain results for true lies), but only incidental results and no characterization is known for multi-agent successful formulas. Results are that: the positive fragment is successful \citep{hvdetal.synthese:2006} (the positive or universal fragment of the language of public announcement logic is inductively defined by $\phi ::= \atom \mid \neg \atom \mid \phi \vel \phi \mid \phi \et \phi \mid \Box_\agent \phi \mid [\neg\phi]\phi$), publicly known formulas are successful (formulas $\Box^*_\Agents \phi$, where $\Box^*_\Agents$ stands for common knowledge for the set of all agents --- however note that $\Box^*_\Group \phi$ may be unsuccessful if $\Group \subset \Agents$), $\neg \Box_\agent \atom$ is successful (and non-trivially believable) \citep{qian:2002} --- an older result than \cite{hollidayetal:2010}, but subsumed by the later characterization of \cite{hollidayetal:2010}.

A strong intuitive and technical motivation for the successful formulas in truthful announcement logic is that they describe the so-called {\em substitution free fragment} of the logic. In general we do not have for dynamic epistemic logics that $\models \phi$ iff $\models \phi[p/\psi]$, for example $[p]p$ is valid but $[p \et \neg \Box p](p \et \neg \Box p)$ is invalid. But this {\em uniform substitution} holds for the fragment of the language consisting of the successful formulas \citep{hollidayetal:2013}.

\paragraph{Self-refuting formulas}

\[ \models \phi \imp [\phi]\neg\phi \]

Similarly to the successful formulas, a formula is self-refuting in believed announcement logic if and only if it is self-refuting in truthful announcement logic. As said, a well-known self-refuting formula (and that is also non-trivially believable) is $\atom \et \neg \Box\atom$. A syntactic characterization of single-agent self-refuting formulas in $\sv$ is also presented in \cite{hollidayetal:2010}, but a more general multi-agent characterization is also not known to us (researchers have been looking for this in vain for some considerable time).

\paragraph{Impossible lies}

\[ \models \neg\phi \imp [\phi]\neg\phi \]

Clearly $\top$ is an impossible lie, as this makes the antecedent $\neg\phi$ of the implication false, but this is the trivial case. Slightly more interesting impossible lies are the booleans. (And all these are non-trivially believable.) For example, $p$ is an impossible lie because $\models \neg p \imp [p] \neg p$. This is obvious, as announcements do not change the value of propositional variables. 

Another impossible lie is $\neg \Box p$. It is trivial: $\models \Box p \imp [\neg \Box p] \Box p$ holds because in any model satisfying $\Box p$ the accessibility relation of the agent is empty after the $\neg \Box p$ update. 

The formula $\neg \Box p$ is also a successful formula. No characterization results are known for impossible lies. 

\paragraph{True lies}

\[ \models \neg\phi \imp [\phi]\phi \]

Above we have already seen examples of true lies. The formula $\T$ (or any other valid formula) is a trivial true lie. Examples of single-agent true lies are:  $\Box p$, $p \vel \Box p$ (which is non-trivially believable). 

Unsatisfiable formulas are not true lies, as $[\bot]\bot$ is equivalent to $\bot$ (in believed announcement logic, not in truthful announcement logic), so that $\neg \bot \imp \bot$ is equivalent to $\bot$. 

Section \ref{sec.four} proves a syntactical characterization of single-agent believable true lies in class $\kdfv$ (Proposition \ref{prop.char}). 

\section{An unstable formula} \label{sec.three}

In this section we investigate some models and formulas where the iteration of the same update never stabilizes and continues to transform the model, forever and ever. First comes an example with a ${\mathcal K}$ model and a single agent, and an alternating $010101...$-satisfiable update (Section \ref{sec.threeone}). Then comes another example generalizing this to arbitrary boolean functions $\sigma$ and $\sigma$-satisfiability  (Section \ref{sec.threearb}). Following that comes a multi-agent example for $\kdfv$ agents (with consistent beliefs)  (Section \ref{sec.threetwo}). 

\subsection{Example of an unstable formula}  \label{sec.threeone}

{\em We demonstrate that the formula $\neg \Box \bot \et ((\Dia \Box \bot \et \Dia \neg\Box \bot) \imp \Dia (\atom \et \Box \bot))$ is unstable for the model defined below and for the boolean function $\sigma \in \{0,1\}^\omega$ (i.e.\  $\sigma: \Naturals^+ \imp \{0,1\}$) such that $\sigma_n = 1$ for $n$ even and  $\sigma_n = 0$ for $n$ odd.}

\bigskip

Consider a model $M$ in Figure \ref{fig.unstable} on the left consisting of a root $\state$ with branches of length $n$ for each positive natural number $n$. We also consider a single propositional variable $\atom$ that is false in the leaves, and from there on towards the root alternatingly true and false. We finally take the value of $\atom$ in the root to be true, but in fact that does not matter.  (In the figure, nodes where $\atom$ is true are denoted $\circ$ and nodes where $\atom$ is false are denoted $\bullet$.) 

We now consider the update of $(M,\state)$ with the formula $\phi = \neg \Box \bot \et ((\Dia \Box \bot \et \Dia \neg\Box \bot) \imp \Dia (\atom \et \Box \bot))$. The formula $\phi$ describes, in other words: ``I am not a leaf node and if I am the root (i.e., if I am the unique node that is branching) then there is a branch of length 1 wherein $p$ is true in the leaf of that branch.'' 

We take the arrow elimination update, that is, arrows pointing to states where the formula is false are deleted, and all other arrows are preserved. The result is the middle structure in Figure \ref{fig.unstable}. The underlying frames of that model and the left model in the figure are isomorphic. This we can see by rotating the middle structure counterclockwise by 45 degrees and removing the isolated (unreachable) points. That result is pictured on the right in the figure.

If we now update again with $\phi$, the original model reappears (in the strong sense that the root-generated submodel is bisimilar --- this is the $\bisim$ notation --- and even isomorphic): \[ \begin{array}{l} M,\state \models\neg\phi \\ M|\phi,\state \models\phi \\M|\phi|\phi,\state \models\neg\phi \hspace{1cm} \hfill \text{such that } (M|\phi|\phi,s) \bisim (M,s) \end{array} \]
and in general we have in this case (we recall that $M|\phi^0 = M$ and $M|\phi^{n+1} = (M|\phi^n)|\phi$)
 \[ \begin{array}{l} M|\phi^n,\state \models\phi \hspace{1cm} \hfill \text{ for $n\in\Nat$ odd} \\
M|\phi^n,\state \models\neg\phi \hfill \text{ for $n\in\Nat$ even or $n=0$} \\
(M|\phi^n,\state) \bisim (M|\phi^{n+2},\state) \hspace{5cm} \hfill \text{for all } n \in \Naturals \\
\end{array} \]

\begin{figure}
\scalebox{.88}{
\begin{tikzpicture}[->,thick]
\node (0) at (0,0) {$\circ$};
\node (11) at (-1,0) {$\bullet$};
\node (21) at (-.85,.85) {$\circ$};
\node (22) at (-1.7,1.7) {$\bullet$};
\node (31) at (0,1) {$\bullet$};
\node (32) at (0,2) {$\circ$};
\node (33) at (0,3) {$\bullet$};
\node (41) at (.85,.85) {$\circ$};
\node (42) at (1.7,1.7) {$\bullet$};
\node (43) at (2.55,2.55) {$\circ$};
\node (44) at (3.4,3.4) {$\bullet$};
\node (0a) at (0,-.3) {\color{white} $\neg\phi$};
\node (21a) at (-1.15,.85) {$\phi$};
\node (31a) at (0.3,1) {$\phi$};
\node (32a) at (0.3,2) {$\phi$};
\node (41a) at (1.15,.85) {$\phi$};
\node (42a) at (2,1.7) {$\phi$};
\node (43a) at (2.85,2.55) {$\phi$};
\node (dots) at (3,0) {$ \ $};
\draw[->] (0) to (11);
\draw[->] (0) to (21);
\draw[->] (21) to (22);
\draw[->] (0) to (31);
\draw[->] (31) to (32);
\draw[->] (32) to (33);
\draw[->] (0) to (41);
\draw[->] (41) to (42);
\draw[->] (42) to (43);
\draw[->] (43) to (44);
\draw[dotted,-] (0) to (dots);
\end{tikzpicture}
{$\stackrel {\phi!} \Imp$}
\begin{tikzpicture}[->,thick]
\node (0) at (0,0) {$\circ$};
\node (11) at (-1,0) {$\bullet$};
\node (21) at (-.85,.85) {$\circ$};
\node (22) at (-1.7,1.7) {$\bullet$};
\node (31) at (0,1) {$\bullet$};
\node (32) at (0,2) {$\circ$};
\node (33) at (0,3) {$\bullet$};
\node (41) at (.85,.85) {$\circ$};
\node (42) at (1.7,1.7) {$\bullet$};
\node (43) at (2.55,2.55) {$\circ$};
\node (44) at (3.4,3.4) {$\bullet$};
\node (0a) at (0,-.3) {$\phi$};
\node (31a) at (0.3,1) {$\phi$};
\node (41a) at (1.15,.85) {$\phi$};
\node (42a) at (2,1.7) {$\phi$};
\node (dots) at (3,0) {$ \ $};
\draw[->] (0) to (21);
\draw[->] (0) to (31);
\draw[->] (31) to (32);
\draw[->] (0) to (41);
\draw[->] (41) to (42);
\draw[->] (42) to (43);
\draw[dotted,-] (0) to (dots);
\end{tikzpicture}
{$\bisim$}
\begin{tikzpicture}[->,thick]
\node (0) at (0,0) {$\circ$};
\node (11) at (-1,0) {$\circ$};
\node (21) at (-.85,.85) {$\bullet$};
\node (22) at (-1.7,1.7) {$\circ$};
\node (31) at (0,1) {$\circ$};
\node (32) at (0,2) {$\bullet$};
\node (33) at (0,3) {$\circ$};
\node (41) at (.85,.85) {$\bullet$};
\node (42) at (1.7,1.7) {$\circ$};
\node (43) at (2.55,2.55) {$\bullet$};
\node (44) at (3.4,3.4) {$\circ$};
\node (0a) at (0,-.3) {$\phi$};
\node (21a) at (-1.15,.85) {$\phi$};
\node (31a) at (0.3,1) {$\phi$};
\node (32a) at (0.3,2) {$\phi$};
\node (41a) at (1.15,.85) {$\phi$};
\node (42a) at (2,1.7) {$\phi$};
\node (43a) at (2.85,2.55) {$\phi$};
\node (dots) at (3,0) {$ \ $};
\draw[->] (0) to (11);
\draw[->] (0) to (21);
\draw[->] (21) to (22);
\draw[->] (0) to (31);
\draw[->] (31) to (32);
\draw[->] (32) to (33);
\draw[->] (0) to (41);
\draw[->] (41) to (42);
\draw[->] (42) to (43);
\draw[->] (43) to (44);
\draw[dotted,-] (0) to (dots);
\end{tikzpicture}
}
\caption{Example unstable formula}
\label{fig.unstable}
\end{figure}
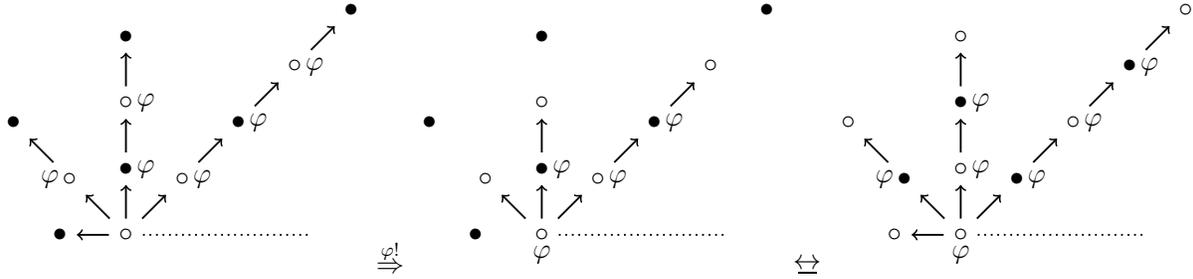

\subsection{Arbitrary boolean function} \label{sec.threearb}

Let now any boolean function $\sigma \in \{0,1\}^\omega$ be given. Consider again the frame of the model $M$ of Subsection \ref{sec.threeone} and the same formula $\phi := \neg \Box \bot \et ((\Dia \Box \bot \et \Dia \neg\Box \bot) \imp \Dia (\atom \et \Box \bot))$. The value in the root of the model is irrelevant. In that model, $\circ$ no longer means that $\atom$ is true. Instead, we decorate the model with values for atom $\atom$ according to the string $\sigma$, as below. This ensures that for all $n \in \Nat$, $M|\phi^n,s \models \sigma_{n+1}(\phi)$. 

For an example, let be $\sigma = 011100011\dots$. Then $\sigma_1 = 0$ (the first digit), $\sigma_2 = 1$, etc. Therefore $p$ is false at the leaf of the branch of length 1 (false means value 0), and in the branch of length 2 $p$ is true halfway ($\sigma_3$) and again false at the leaf, and so on. For $n=0$, we should get $M|\phi^0,s \models \sigma_1(\phi)$. As $M|\phi^0 = M$, and $\sigma_1(\phi) = \neg\phi$, we get $M,s \models \neg\phi$. And this is indeed the case, because the subformula $\Dia (\atom \et \Box \bot)$ of $\phi$ is false in the root, because $\atom$ is false ($\sigma_1 = 0$) at the leaf of the branch of length 1.

Announcement of $\phi$ causes all leaves to be eliminated from the model, as before, with the result that all leaves are now decorated with propositions $p$ with value $\sigma_2 = 1$. So we should have $M|\phi,s \models \phi$, i.e., $M|\phi^1,s \models \sigma_2(\phi)$. And so it is, because $M|\phi,s \models \Dia (\atom \et \Box \bot)$. And so on.

\begin{center}
\scalebox{.9}{
\begin{tikzpicture}[->,thick]
\node (0) at (0,0) {$\circ$};
\node (11) at (-1,0) {$\circ$};
\node (21) at (-.85,.85) {$\circ$};
\node (22) at (-1.7,1.7) {$\circ$};
\node (31) at (0,1) {$\circ$};
\node (32) at (0,2) {$\circ$};
\node (33) at (0,3) {$\circ$};
\node (41) at (.85,.85) {$\circ$};
\node (42) at (1.7,1.7) {$\circ$};
\node (43) at (2.55,2.55) {$\circ$};
\node (44) at (3.4,3.4) {$\circ$};
\node (11a) at (-1,-.3) {$\sigma_1$};
\node (21a) at (-1.15,.85) {$\sigma_2$};
\node (22a) at (-2,1.7) {$\sigma_1$};
\node (31a) at (0.3,1) {$\sigma_3$};
\node (32a) at (0.3,2) {$\sigma_2$};
\node (33a) at (0.3,3) {$\sigma_1$};
\node (41a) at (1.15,.85) {$\sigma_4$};
\node (42a) at (2,1.7) {$\sigma_3$};
\node (43a) at (2.85,2.55) {$\sigma_2$};
\node (44a) at (3.7,3.4) {$\sigma_1$};
\node (dots) at (3,0) {$ \ $};
\draw[->] (0) to (11);
\draw[->] (0) to (21);
\draw[->] (21) to (22);
\draw[->] (0) to (31);
\draw[->] (31) to (32);
\draw[->] (32) to (33);
\draw[->] (0) to (41);
\draw[->] (41) to (42);
\draw[->] (42) to (43);
\draw[->] (43) to (44);
\draw[dotted,-] (0) to (dots);
\end{tikzpicture}$\stackrel {\phi} \Imp$
\begin{tikzpicture}[->,thick]
\node (0) at (0,0) {$\circ$};
\node (11) at (-1,0) {$\circ$};
\node (21) at (-.85,.85) {$\circ$};
\node (22) at (-1.7,1.7) {$\circ$};
\node (31) at (0,1) {$\circ$};
\node (32) at (0,2) {$\circ$};
\node (33) at (0,3) {$\circ$};
\node (41) at (.85,.85) {$\circ$};
\node (42) at (1.7,1.7) {$\circ$};
\node (43) at (2.55,2.55) {$\circ$};
\node (44) at (3.4,3.4) {$\circ$};
\node (11a) at (-1,-.3) {$\sigma_2$};
\node (21a) at (-1.15,.85) {$\sigma_3$};
\node (22a) at (-2,1.7) {$\sigma_2$};
\node (31a) at (0.3,1) {$\sigma_4$};
\node (32a) at (0.3,2) {$\sigma_3$};
\node (33a) at (0.3,3) {$\sigma_2$};
\node (41a) at (1.15,.85) {$\sigma_5$};
\node (42a) at (2,1.7) {$\sigma_4$};
\node (43a) at (2.85,2.55) {$\sigma_3$};
\node (44a) at (3.7,3.4) {$\sigma_2$};
\node (dots) at (3,0) {$ \ $};
\draw[->] (0) to (11);
\draw[->] (0) to (21);
\draw[->] (21) to (22);
\draw[->] (0) to (31);
\draw[->] (31) to (32);
\draw[->] (32) to (33);
\draw[->] (0) to (41);
\draw[->] (41) to (42);
\draw[->] (42) to (43);
\draw[->] (43) to (44);
\draw[dotted,-] (0) to (dots);
\end{tikzpicture}
$\stackrel {\phi} \Imp \hspace{1cm} \dots$
}
\end{center}

\noindent 
This proves Proposition \ref{prop.a} that there are $\sigma$-satisfiable formulas on class ${\mathcal K}$ for any string $\sigma$.

\subsection{An unstable formula for agents with consistent beliefs} \label{sec.unstablecons}  \label{sec.threetwo}

A common way to transform a model for a directed single-agent (unimodal) setting exhibiting a certain desirable property into a model for a symmetric multi-agent setting with the same property, is to replace an arrow by a pair of (undirected) links for different agents, i.e., a pair $(x,y) \in R$ by two pairs $(x,z) \in R_a$ and $(z,y) \in R_b$.  In a multi-agent $\sv$ setting, this means chaining equivalence classes of size two for different agents $a$ and $b$. Instead of arbitrarily long finite chains, we then build arbitrarily long alternating and interlocking $a/b$ chains. The technique has been used to great effect for undecidability arguments for $\sv$ logics \citep{frenchetal:2008} and also to great effect for expressivity arguments for such logics \citep{hvdetal.del:2007,kooi.jancl:2007}. This chaining procedure can be extended in a natural way to $\kdfv$ models (for consistent belief), namely when (at least) the root of the model is an unreachable state wherein incorrect belief is possible, but where (almost) all remaining states constitute equivalence classes (wherein belief is correct). We apply this procedure to get a symmetric multi-agent model from an asymmetric single-agent model, to the model $M$ of Subsection \ref{sec.threeone}, in order to get an example of an unstable formula in a multi-agent $\kdfv$ setting. 

Starting with the model $M$ of that section, we thus get the following model $N$. The accessibility relation is solid for agent $\agent$ and it is dashed for agent $\agentb$. Intuitively, to get $N$ from $M$, all nodes are replaced by two nodes with the same valuation and that are indistinguishable for an agent $\agentb$, whereas one might say the previous directed arrows are replaced by indistinguishability for an agent $\agent$.  Except for the root, wherein we do something special: all pairs in  the accessibility relation for $\agent$ starting in the root are directed, and we add another node to the model that is accessible from the root for agent $\agentb$ only (in the figure: the node below the root), and from where all nodes accessible from the root for agent $\agent$ are also $\agent$-accessible (in the figure we have only drawn one of those arrows, for visual clarity). This is a $\kdfv$ model. We assume the usual visual conventions for such models: all directed arrows point to {\em clusters} of indistinguishable nodes for that agent (see Appendix B). However, merely in order to emphasize the similarity with the model $M$, from the root of the model we have many outgoing arrows (whereas according to the visual convention one would have been enough).

\bigskip

\scalebox{0.8}{
\begin{tikzpicture}[thick]
\node (0) at (0,0) {$\circ$};
\node (11) at (-1,0) {$\bullet$};
\node (12) at (-2,0) {$\bullet$};
\node (21) at (-.85,.85) {$\circ$};
\node (22) at (-1.7,1.7) {$\circ$};
\node (23) at (-2.55,2.55) {$\bullet$};
\node (24) at (-3.4,3.4) {$\bullet$};
\node (31) at (0,1) {$\bullet$};
\node (32) at (0,2) {$\bullet$};
\node (33) at (0,3) {$\circ$};
\node (34) at (0,4) {$\circ$};
\node (35) at (0,5) {$\bullet$};
\node (36) at (0,6) {$\bullet$};
\node (41) at (.85,.85) {$\circ$};
\node (42) at (1.7,1.7) {$\circ$};
\node (43) at (2.55,2.55) {$\bullet$};
\node (44) at (3.4,3.4) {$\bullet$};
\node (45) at (4.25,4.25) {$\circ$};
\node (46) at (5.1,5.1) {$\circ$};
\node (47) at (5.95,5.95) {$\bullet$};
\node (48) at (6.8,6.8) {$\bullet$};
\node (dots) at (3,0) {$ \ $};
\draw[->] (0) to (11);
\draw[dashed,-] (11) to (12);
\draw[->] (0) to (21);
\draw[dashed,-] (21) to (22);
\draw[-] (22) to (23);
\draw[dashed,-] (23) to (24);
\draw[->] (0) to (31);
\draw[dashed,-] (31) to (32);
\draw[-] (32) to (33);
\draw[dashed,-] (33) to (34);
\draw[-] (34) to (35);
\draw[dashed,-] (35) to (36);
\draw[->] (0) to (41);
\draw[dashed,-] (41) to (42);
\draw[-] (42) to (43);
\draw[dashed,-] (43) to (44);
\draw[-] (44) to (45);
\draw[dashed,-] (45) to (46);
\draw[-] (46) to (47);
\draw[dashed,-] (47) to (48);
\draw[-] (11) to (21);
\draw[-] (21) to (31);
\draw[-] (31) to (41);
\node (x) at (1,0.05) {$ \ $};
\draw[dotted,-] (41) to (x);
\node (newroot) at (0,-1) {$\bullet$};
\draw[dashed,->] (0) to (newroot);
\draw[->] (newroot) to (11);
\draw[dotted,-] (0) to (dots);
\end{tikzpicture}
}

\noindent Instead of \[ \phi = \neg \Box \bot \et ((\Dia \Box \bot \et \Dia \neg\Box \bot) \imp \Dia (\atom \et \Box \bot)) \] we then get a more or less corresponding multi-agent formula \[ \psi = \neg\Dia_b(\Box_a \atom \vel \Box_a \neg \atom) \et ((\atom\et\Box_b\neg\atom) \imp \Dia_a\Dia_b \Box_a \atom)\] To get the latter from the former, first replace each $\Dia$ by $\Dia_a\Dia_b$. We recall that in the model, we replaced each directed unlabeled arrow by an $\agent$ step plus a $\agentb$ step. Then, replace $\Box \bot$ by $\Dia_b(\Box_a \atom \vel \Box_a \neg \atom)$. We recall that $\Box \bot$ defined a leaf node in $M$. As $\kdfv$ models are serial, we now observe that the leaves of model $N$ are the unique nodes wherein $\agent$ knows whether $\atom$: $\Box_a \atom \vel \Box_a \neg \atom$, such that  $\Dia_b (\Box_a \atom \vel \Box_a \neg \atom)$ is only true in the final two nodes of each branch. 

The extra node is needed to allow for a formula that distinguishes the root node from other nodes: the root of $N$ is the unique node wherein agent $b$ has incorrect belief: $p \et \Box_b \neg p$. In all other nodes, agent $b$ (correctly) knows the value of $p$. (We did not know how to distinguish the root node from other nodes without introducing such an additional node.)

The nodes $a$-accessible from the root form an infinite $a$-equivalence class. The $a$ arrows leading from the extra node below the root to that class are there so that this node is not a leaf.
 
With the `translation' so far, the final part $\Dia (\atom \et \Box \bot)$ of the formula $\phi$ becomes $\Dia_a\Dia_b(\atom \et \Dia_b(\Box_a \atom \vel \Box_a \neg \atom))$, which is $\kdfv$ equivalent to $\Dia_a\Dia_b\Box_a \atom$, as found in the formula $\psi$. 

In plain English, $\psi$ describes that: $(i)$ $b$ does not consider it possible that $a$ knows whether $p$, and that $(ii)$ in the node where $b$'s belief is incorrect (the root note), $a$ considers it possible that $b$ considers it possible for $a$ to know {\em that} (not whether!) $\atom$. Condition $(i)$ is always false in the leaves and in the nodes just below the leaves, whereas condition $(ii)$ is only true in the root if there is a path of length 2 to a leaf node satisfying $p$. If we now update model $N$ with $\psi$ we get this model $N|\psi$ --- wherein $\psi$ is now true in the root, as the length 2 path is now to a $p$ node:

\bigskip

\scalebox{.8}{
\begin{tikzpicture}[thick]
\node (0) at (0,0) {$\circ$};
\node (11) at (-1,0) {$\circ$};
\node (12) at (-2,0) {$\circ$};
\node (21) at (-.85,.85) {$\bullet$};
\node (22) at (-1.7,1.7) {$\bullet$};
\node (23) at (-2.55,2.55) {$\circ$};
\node (24) at (-3.4,3.4) {$\circ$};
\node (31) at (0,1) {$\circ$};
\node (32) at (0,2) {$\circ$};
\node (33) at (0,3) {$\bullet$};
\node (34) at (0,4) {$\bullet$};
\node (35) at (0,5) {$\circ$};
\node (36) at (0,6) {$\circ$};
\node (41) at (.85,.85) {$\bullet$};
\node (42) at (1.7,1.7) {$\bullet$};
\node (43) at (2.55,2.55) {$\circ$};
\node (44) at (3.4,3.4) {$\circ$};
\node (45) at (4.25,4.25) {$\bullet$};
\node (46) at (5.1,5.1) {$\bullet$};
\node (47) at (5.95,5.95) {$\circ$};
\node (48) at (6.8,6.8) {$\circ$};
\node (dots) at (3,0) {$ \ $};
\draw[->] (0) to (11);
\draw[dashed,-] (11) to (12);
\draw[->] (0) to (21);
\draw[dashed,-] (21) to (22);
\draw[-] (22) to (23);
\draw[dashed,-] (23) to (24);
\draw[->] (0) to (31);
\draw[dashed,-] (31) to (32);
\draw[-] (32) to (33);
\draw[dashed,-] (33) to (34);
\draw[-] (34) to (35);
\draw[dashed,-] (35) to (36);
\draw[->] (0) to (41);
\draw[dashed,-] (41) to (42);
\draw[-] (42) to (43);
\draw[dashed,-] (43) to (44);
\draw[-] (44) to (45);
\draw[dashed,-] (45) to (46);
\draw[-] (46) to (47);
\draw[dashed,-] (47) to (48);
\draw[dotted,-] (0) to (dots);
\draw[-] (11) to (21);
\draw[-] (21) to (31);
\draw[-] (31) to (41);
\node (x) at (1,0.05) {$ \ $};
\draw[dotted,-] (41) to (x);
\node (newroot) at (0,-1) {$\bullet$};
\draw[dashed,->] (0) to (newroot);
\draw[->] (newroot) to (11);
\draw[dotted,-] (0) to (dots);
\end{tikzpicture}
}

\noindent The next iteration restores $N$: $(N|\psi|\psi,s) \bisim (N,s)$. Again, we get:
\[ \begin{array}{l} N|\psi^n,\state \models\psi \hspace{1cm} \hfill \text{ for $n\in\Nat$ odd} \\
N|\psi^n,\state \models\neg\psi \hfill \text{ for $n\in\Nat$ even or $n=0$} \\
(N|\psi^n,\state) \bisim (N|\psi^{n+2},\state) \hspace{5cm} \hfill \text{for all } n \in \Naturals \\
\end{array} \]
\weg{

\subsection{A communicative scenario for the multi-agent setting}  \label{sec.threethree}

{\em The infinitely branching model $N$ of the previous subsection seems to fall out of the sky. In this section we provide a story and a communicative scenario involving truthtelling and lying producing a very similar model with the same iterated update behaviour, as a variation on a well-known epistemic riddle. This merely shows `the power of updates': what we will do is nothing but customize an epistemic model based on the semantic operations of public announcements and agent announcements \citep{hvd.lying:2014}. It is shows by example that {\em any} multi-agent $\kdfv$ structure can be justified and backed up by communicative scenarios involving truthtelling and lying.}

\bigskip

Consider two agents Anne and Bill, and a quizmaster, Rineke, performing a number of actions and informing Anne and Bill about such actions. Anne and Bill each have a sticker with a natural number on their forehead. These numbers are consecutive. Anne and Bill have to discover what the number is. (So, we have the standard setting of the consecutive number riddle \citep{littlewood:1953,hvdetal.puzzle:2015}.) 

We further assume that the actual numbers are that Anne has 0 and Bill has 1.

So far, so good. However, from here on things proceed differently as usual... For a start, Anne's sticker has been attached to her forehead with superglue. Which she does not know yet but will become relevant later in the story.

Given that Anne has 0 and Bill has 1, Anne and Bill have common knowledge that Anne's number is even and that Bill's number is odd, so without loss of generality we may assume that in our modelling. Some of the announcements below will be lies. We assume that Anne and Bill are {\em skeptical agents} (in the technical sense of \cite{hvd.lying:2014}) that believe lies if they consider it possible that the lie is true but that otherwise do not believe the lie; i.e., they do not change their beliefs if they already believe the opposite of the lie.

\begin{enumerate}
\item The quizmaster tells Anne and Bill that she will inform them what the maximum even number is that Anne may have, except when Anne has 2 and Bill has 1, in which case she will only inform Bill, and when Anne has 0 and Bill has 1, in which case she informs neither Anne nor Bill. She then takes two envelopes, puts a sheet of paper into each envelope with the promised information (for example, two sheets with the number 6; or an empty sheet intended for Anne and a sheet with 0 intended for Bill), seals the envelopes, and hands the envelopes to Anne and to Bill.

{\em But this is not all.}

\item The quizmaster tells Anne and Bill that if the aforementioned maximum for Anne's number is a multiple of 4 (including 0), then the backside of Anne's sticker has superglue and she will need surgery to have it removed later, whereas otherwise (for example, when Anne has 2) it is as usual easily detachable. (Of course she really has superglue on the backside, as this is more fun.)

\item If Bill has the number 1, he says to Anne: ``You do not have the number 0.'' (This is a lie, because Anne has the number 0.)

\item The quizmaster tells Anne and Bill that Anne doesn't have superglue on her sticker. (This is lie.)

\end{enumerate}

The model $Q$ (with root $\mathbf{01}$) resulting from these announcements is as follows, below on the left. (The Appendix ``Customizing Consecutive Numbers'' on page \pageref{appendix.a} provides details.)

Now consider the proposition $\eta$: \begin{quote} {\em ``Bill considers it possible that (Anne believes that she knows her number and that Anne believes that she knows whether she has superglue on her sticker), and if Bill incorrectly believes that Anne has no superglue on her sticker, then Anne considers it possible that Bill considers it possible that (Anne believes that she knows her number and that Anne believes {\bf that} she has superglue on her sticker).'' } \end{quote}
As before, this proposition is only true in the final two nodes of each branch and in the root $\mathbf{01}$ in case the $01\text{---}21\text{---}23$ branch is one wherein Anne has superglue. (Initally, not.)

The updated model $Q|\eta$ is therefore as follows, below on the right (and the update $Q|\eta|\eta$ of that is again isomorphic to the initial model $Q$, etc.).

\bigskip

\noindent
\scalebox{.7}{
\begin{tikzpicture}[thick]
\node (0) at (0,0) {$\mathbf{01}$};
\node (11) at (-1,0) {$21$};
\node (12) at (-2,0) {$23$};
\node (21) at (-.85,.85) {$\mathbf{21}$};
\node (22) at (-1.7,1.7) {$\mathbf{23}$};
\node (23) at (-2.55,2.55) {$\mathbf{43}$};
\node (24) at (-3.4,3.4) {$\mathbf{45}$};
\node (31) at (0,1) {$21$};
\node (32) at (0,2) {$23$};
\node (33) at (0,3) {$43$};
\node (34) at (0,4) {$45$};
\node (35) at (0,5) {$65$};
\node (36) at (0,6) {$67$};
\node (41) at (.85,.85) {$\mathbf{21}$};
\node (42) at (1.7,1.7) {$\mathbf{23}$};
\node (43) at (2.55,2.55) {$\mathbf{43}$};
\node (44) at (3.4,3.4) {$\mathbf{45}$};
\node (45) at (4.25,4.25) {$\mathbf{65}$};
\node (46) at (5.1,5.1) {$\mathbf{67}$};
\node (47) at (5.95,5.95) {$\mathbf{87}$};
\node (48) at (6.8,6.8) {$\mathbf{89}$};
\node (dots) at (3,0) {$ \ $};
\draw[->] (0) to (11);
\draw[dashed,-] (11) to (12);
\draw[->] (0) to (21);
\draw[dashed,-] (21) to (22);
\draw[-] (22) to (23);
\draw[dashed,-] (23) to (24);
\draw[->] (0) to (31);
\draw[dashed,-] (31) to (32);
\draw[-] (32) to (33);
\draw[dashed,-] (33) to (34);
\draw[-] (34) to (35);
\draw[dashed,-] (35) to (36);
\draw[->] (0) to (41);
\draw[dashed,-] (41) to (42);
\draw[-] (42) to (43);
\draw[dashed,-] (43) to (44);
\draw[-] (44) to (45);
\draw[dashed,-] (45) to (46);
\draw[-] (46) to (47);
\draw[dashed,-] (47) to (48);
\draw[-] (11) to (21);
\draw[-] (21) to (31);
\draw[-] (31) to (41);
\node (x) at (1,0.05) {$ \ $};
\draw[dotted,-] (41) to (x);
\node (newroot) at (0,-1) {$01$};
\draw[dashed,->] (0) to (newroot);
\draw[->] (newroot) to (11);
\draw[dotted,-] (0) to (dots);
\end{tikzpicture}
}
\ \ \ 
\scalebox{.7}{
\begin{tikzpicture}[thick]
\node (0) at (0,0) {$\mathbf{01}$};
\node (11) at (-1,0) {$\mathbf{21}$};
\node (12) at (-2,0) {$\mathbf{23}$};
\node (21) at (-.85,.85) {$21$};
\node (22) at (-1.7,1.7) {$23$};
\node (23) at (-2.55,2.55) {$43$};
\node (24) at (-3.4,3.4) {$45$};
\node (31) at (0,1) {$\mathbf{21}$};
\node (32) at (0,2) {$\mathbf{23}$};
\node (33) at (0,3) {$\mathbf{43}$};
\node (34) at (0,4) {$\mathbf{45}$};
\node (35) at (0,5) {$\mathbf{65}$};
\node (36) at (0,6) {$\mathbf{67}$};
\node (41) at (.85,.85) {$21$};
\node (42) at (1.7,1.7) {$23$};
\node (43) at (2.55,2.55) {$43$};
\node (44) at (3.4,3.4) {$45$};
\node (45) at (4.25,4.25) {$65$};
\node (46) at (5.1,5.1) {$67$};
\node (47) at (5.95,5.95) {$87$};
\node (48) at (6.8,6.8) {$89$};
\node (dots) at (3,0) {$ \ $};
\draw[->] (0) to (11);
\draw[dashed,-] (11) to (12);
\draw[->] (0) to (21);
\draw[dashed,-] (21) to (22);
\draw[-] (22) to (23);
\draw[dashed,-] (23) to (24);
\draw[->] (0) to (31);
\draw[dashed,-] (31) to (32);
\draw[-] (32) to (33);
\draw[dashed,-] (33) to (34);
\draw[-] (34) to (35);
\draw[dashed,-] (35) to (36);
\draw[->] (0) to (41);
\draw[dashed,-] (41) to (42);
\draw[-] (42) to (43);
\draw[dashed,-] (43) to (44);
\draw[-] (44) to (45);
\draw[dashed,-] (45) to (46);
\draw[-] (46) to (47);
\draw[dashed,-] (47) to (48);
\draw[-] (11) to (21);
\draw[-] (21) to (31);
\draw[-] (31) to (41);
\node (x) at (1,0.05) {$ \ $};
\draw[dotted,-] (41) to (x);
\node (newroot) at (0,-1) {$01$};
\draw[dashed,->] (0) to (newroot);
\draw[->] (newroot) to (11);
\draw[dotted,-] (0) to (dots);
\end{tikzpicture}
}

}

\noindent This completes our example of a believable $(01)^\omega$-satisfiable formula for $\kdfv$. From this example now directly follows Proposition \ref{prop.b} that there are believable $\sigma$-satisfiable formulas on class $\kdfv$ for any string $\sigma$: this is similar to how we generalized the example of $(01)^\omega$-satisfiability on ${\mathcal K}$ to a proof of $\sigma$-satisfiability on ${\mathcal K}$ for any $\sigma\in\{0,1\}^\omega$, using the same formula and the same model except for a different decoration of the variable $p$.

\section{Characterization of single-agent believable true lies} \label{sec.four}

Exactly which formulas are true lies? Can they been characterized syntactically, and if so, how? The corresponding problem for successful and self-refuting formulas has been studied and solved for the single-agent case \citep{hollidayetal:2010}. These characterizations are given for $\sv$ (see Section \ref{sec.2valid}) but are said to hold also for $\kdfv$ ``with minor changes'' \citep{hollidayetal:2010}. The problem is still open for the multi-agent case, for both classes of formulas. Two main reasons that the single-agent case is easier, are that in that case, first, it is well known that any formula is ($\sv$ or $\kdfv$) equivalent to a formula in disjunctive normal form without any nestings of modalities, and, second, that satisfiability of a formula in disjunctive normal form can be checked syntactically \citep{hollidayetal:2010} (see below for details). Based on these two properties, the characterizations in \cite{hollidayetal:2010} check that certain disjuncts (conditionally) exist in the (disjunctive normal form) formula, which is shown to ensure that the formula is of the required type.

In this section we give a similar characterization of believable true lies. It is heavily inspired by \cite{hollidayetal:2010}. Indeed, we take the technique directly from them, as well as notation and concepts. Just as \cite{hollidayetal:2010} syntactically characterize the set of successful formulas, we syntactically characterize the \emph{complement} of the set of believable true lies. The differences are in the details of the characterization and the corresponding proof.

In the rest of this section, a ``formula'' always refers (single-agent) formula in the standard modal language. (Note that any formula in $\lang$ is equivalent to such an announcement-free formula.) We now formally define the two properties mentioned above, disjunctive normal forms and syntactic satisfiability checking, and then give the characterization in Definition \ref{def:syntactic}.

\begin{definition}[Disjunctive normal form] A formula $\phi$ is in \emph{disjunctive normal form} iff it is a disjunction of conjunctions of the form \[\delta = \alpha \wedge \Box \beta_1 \wedge \ldots \wedge \Box \beta_n \wedge \Diamond \gamma_1 \wedge \ldots \wedge \Diamond \gamma_m\] where $\alpha$ and each $\gamma_i$ are conjunctions of literals and each $\beta_i$ is a disjunction of literals. 
\end{definition}
We write $\ourneg{l}$ to denote the negation of a literal $l$ where double negations are removed. Similarly, when $\beta$ is a disjunction of literals, $\ourneg{\beta}$ denotes the conjunction of negated literals with single negation removed. For a given conjuction $\delta$ of the form above, in the following we will write $\delta^\alpha$ for $\alpha$ and $\delta^{\Box\Diamond}$ for $\Box \beta_1 \wedge \ldots \wedge \Box \beta_n \wedge \Diamond \gamma_1 \wedge \ldots \wedge \Diamond \gamma_m$. Every formula is $\kdfv$-equivalent to one in disjunctive normal form.

\begin{definition}[Clarity \citep{hollidayetal:2010}] 
Given a conjunction or disjunction $\chi$ of literals, $L(\chi)$ denotes the set of  literals occurring in it. Set $L(\chi)$ is \emph{open} iff no literal in $L(\chi)$ is the negation of any other. A conjunction $\delta = \alpha \wedge \Box \beta_1 \wedge \ldots \wedge \Box \beta_n \wedge \Diamond \gamma_1 \wedge \ldots \wedge \Diamond \gamma_m$ in disjunctive normal form is \emph{clear} iff (i) $L(\alpha)$ is open; (ii) there is an open set of literals $\{l_1,\ldots,l_n\}$ with $l_i \in L(\beta_i)$; and (iii) for every $\gamma_k$ there is a set of literals $\{l_1,\ldots,l_n\}$ with $l_i \in L(\beta_i)$ such that $\{l_1,\ldots,l_n\} \cup L(\gamma_k)$ is open. A disjunction in disjunctive normal form is clear iff at least one of the disjuncts is clear.
\end{definition}
The key point about clarity is that a formula is $\kdfv$-satisfiable iff it is clear \cite[Lemma 3.6]{hollidayetal:2010}.

We define a formula that we call a \emph{disjunctive lying form}. We will then show that, modulo logical equivalence, the class of these formulas corresponds exactly to the class of formulas that are not believable true lies. The characterization will be explained after the definition. 

\begin{definition}[Disjunctive lying form]
  \label{def:syntactic}
 A formula $\phi$ in disjunctive normal form is a \emph{disjunctive lying form} iff there exist (possibly empty) sets $S$ and $T$ of disjuncts of $\phi$ and a conjunct $\Box \beta_\theta$ of each $\theta \in T$, such that any disjunctive normal form of
\[\chi = \neg \phi \wedge \Diamond \phi \wedge \chi_1 \wedge \chi_2 \wedge \chi_3\]
is clear\footnote{The definition allows both $S$ and $T$ to be empty. However, if $S$ is empty then (any disjunctive form of) $\chi$ is not clear ($\chi_3$ is a contradiction), so it follows that a disjunctive lying form actually must have a non-empty $S$ as a witness. If $T$ is empty then $\chi_3$ is a tautology. There exist disjunctive lying forms with an empty $T$ as a witness (one example is $p \wedge \Box p$).}
, where
\[\chi_1 = \bigwedge_{\theta \in T} t(\theta) \wedge \bigwedge_{\theta \in \overline{T}} \neg t(\theta)\quad\quad\quad t(\theta) = \theta^\alpha \wedge \bigwedge_{\Diamond \gamma \mbox{ in } \theta} \bigvee_{\sigma \in S} \Diamond(\sigma^\alpha \wedge \gamma)\]
\[\chi_2 = \bigwedge_{\sigma \in S}\sigma^{\Box\Diamond} \wedge \bigwedge_{\sigma \in \overline{S}}\neg \sigma^{\Box\Diamond}\]
\[\chi_3 = \bigwedge_{\theta \in T}\bigvee_{\sigma \in S} \Diamond(\sigma^\alpha \wedge \ourneg{\beta_\theta})\]
and $\overline{X}$ denotes the set of disjuncts of $\phi$ that are not in $X$.
\end{definition}

We now recall (page \pageref{th:char}):

\bigskip

\noindent {\bf Proposition \ref{th:char}} \ \ 
{\em A formula is a believable true lie iff it is not equivalent to a disjunctive lying form.}

\bigskip

\noindent Before we prove Proposition \ref{th:char}, let us explain the intuition behind the definition. This will also serve as a guide to the proof of Proposition \ref{th:char}. If $\chi$ is true in some pointed model (is clear), then $\neg \phi \wedge \Diamond\phi$ is true. The role of $\chi_1-\chi_3$ is to ensure that $\neg\phi$ remains false in the updated model --- which holds iff $\phi$ is not a believable true lie --- by ensuring that certain disjuncts (conditionally) exist in $\phi$.

The role of $S$ is to syntactically encode which states are still accessible from the current state after the update. These are exactly the states where $\sigma^\alpha$ is true for some $\sigma \in S$.  Since $\phi$ is false in the initial pointed model, there is only one way it can become true in the updated pointed model: if, for some disjunct $\theta$ of $\phi$ where $\theta^\alpha$ is already true and all $\Diamond\gamma$ are already true and \emph{stay} true in the updated model, all $\Box\beta$ that were false in the initial pointed model \emph{become} true in the updated model. We must avoid that. Those disjuncts that can potentially become true are exactly those in $T$. Thus, we must make sure that for every disjunct $\theta \in T$, there is at least one $\Box\beta$ such that $\beta$ is false in at least one state that is still accessible after the update. That is ensured by $\chi_3$.

As an example, to see that the believable true lie $p \vee \Box p$ (already in disjunctive normal form) is not a disjunctive lying form, we need to check for all possible subsets of disjuncts $S$ and $T$ over the set of all disjuncts $\{p, \Box p\}$, whether $\chi$ is clear. There are only two possibilities for $T$: $T = \emptyset$ or $T = \{\Box p\}$. In the former case we get that $\chi_1 = \neg p \wedge \bot$ (not clear). In the latter case, there are four possibilities for $S$: if $S=\emptyset$ then $\chi_3 = \bot$; if $S=\{p\}$ then $\chi_3 = \Diamond(p \wedge \neg p)$; if $S=\{\Box p\}$ then $\chi_2 = \Box p$ and $\chi_3 = \Diamond \neg p$; and if $S = \{p,\Box p\}$ then $\chi_2 = \Box p$ and $\chi_3 = \Diamond \neg p \wedge \Diamond (p \wedge \neg p)$. In all cases $\chi$ is not clear.

As another example, consider $p \wedge \Box p$ which is \emph{not} a believable true lie. To see that it is a disjunctive lying form, take $S = \{p \wedge \Box p\}$ and $T = \emptyset$. We get that $\chi_1 = \neg p$, and $\chi_2 = \Box p$ and $\chi_3 = \top$. It is easy to see that $\chi$ is clear (it is equivalent to $\neg p \wedge \Box p \wedge \Diamond (p \wedge \Box p)$).

\begin{proof}[of Prop.~\ref{th:char}]  Without loss of generality, let $\phi$ be a disjunctive normal form.
 
Consider first the implication towards the left: assume that $\phi$ is a disjunctive lying form and let $S$, $T$, $\chi$ and $\beta_\theta$ for each $\theta \in T$ be as in Def. \ref{def:syntactic}. We must show that there exists $M,s$ such that $M,s \models \neg \phi \wedge \Diamond \phi$ and $\upd{M}{\phi},s \models \neg \phi$.  Since any disjunctive normal form of $\chi$ is clear iff it is satisfiable, so let $M,s$ be such that $M,s \models \chi$. Then $M,s \models \neg \phi \wedge \Diamond \phi$. Assume, towards a contradiction, that $\upd{M}{\phi},s \models \phi$. Then there exists a disjunct $\theta$ in $\phi$ such that $\upd{M}{\phi},s\models \theta$.  As explained above, we are going to show that $\chi_3$ ensures that this is impossible. To do that, we must first show that it must be the case that $\theta \in T$ (intuitively defined as the set of disjuncts that can potentially become true in the updated model), and then that $\chi_3$ ensures that the conjunct $\Box\beta_\theta$ is false in $\upd{M}{\phi},s$, leading to a contradiction.

\medskip

Thus, we first show that it must be the case that $\theta \in T$. From $\upd{M}{\phi},s\models \theta^\alpha$ it follows that $M,s\models \theta^\alpha$ ($\theta^\alpha$ is propositional). We now show that also $M,s \models \bigwedge_{\Diamond \gamma \mbox{ in } \theta} \bigvee_{\sigma \in S} \Diamond(\sigma^\alpha \wedge \gamma)$, and it follows from $M,s \models \chi_1$ that $\theta \in T$. Let $\Diamond\gamma$ be a conjunct in $\theta$. We have that $\upd{M}{\phi},s \models \Diamond\gamma$, i.e., that there exists a state $t$ such that $Rst$ and $\upd{M}{\phi},t\models \gamma$ and, since $t$ is a state in $\upd{M}{\phi}$, $M,t \models \phi$. From the latter it follows that $M,t \models \sigma$ for some disjunct $\sigma$ of $\phi$.  From $M,t \models \sigma^{\Box\Diamond}$ and $Rst$ it follows that $M,s \models \sigma^{\Box\Diamond}$ by standard $\kdfv$ reasoning, and thus that $\sigma \in S$ by $\chi_2$.  Since $\gamma$ is propositional, $\upd{M}{\phi},t\models \gamma$ implies that $M,t \models \gamma$. Thus, $M,s \models \Diamond(\sigma^\alpha \wedge \gamma)$ for some $\sigma \in S$. Since $\Diamond\gamma$ was arbitrary, we get that $M,s \models \bigwedge_{\Diamond \gamma \mbox{ in } \theta} \bigvee_{\sigma \in S} \Diamond(\sigma^\alpha \wedge \gamma)$ and thus that $\theta \in T$.

Having shown that $\theta \in T$, we now get a contradiction from the fact that $M,s \models \chi_3$. It follows that there exists a $\sigma \in S$ such that $M,s \models \Diamond(\sigma^\alpha \wedge \ourneg{\beta_\theta})$. That means that there exists a state $t$ such that $Rst$ and $M,t \models \sigma^\alpha \wedge \ourneg{\beta_\theta}$. From the fact that $\sigma \in S$ we know that $M,s \models \sigma^{\Box\Diamond}$ and thus, by standard $\kdfv$ reasoning, that $M,t \models \sigma^{\Box\Diamond}$. That means that $M,t \models \sigma$ and thus $M,t \models \phi$. The latter means that $t$ is accessible from $s$ in $\upd{M}{\phi}$, and thus that $\upd{M}{\phi},s \models \neg\Box \beta_\theta$. But this contradicts $\upd{M}{\phi},s \models \Box \beta_\theta$ which follows from the assumption that $\upd{M}{\phi},s \models \theta$. This concludes the implication towards the left.

For the direction towards the right, assume that $\phi$ is not a true lie, i.e., that there exist $M,s$ such that $M,s \models \neg\phi \wedge \Diamond \phi$ and $\upd{M}{\phi},s \models \neg \phi$. We now define $T$, $S$, and $\beta_\theta$ for each $\theta \in T$, as follows. We will then show that the resulting $\chi$ is satisfiable. For any disjunct $\delta$ in $\phi$, let
  \[\delta \in T \Leftrightarrow M,s \models t(\delta) \quad\quad\quad \delta \in S \Leftrightarrow M,s \models \delta^{\Box\Diamond}\]
Let $\theta \in T$. In order to define $\beta_\theta$, we first show that for every conjunct $\Diamond \gamma$ in $\theta$, we have that $\upd{M}{\phi},s \models \Diamond \gamma$.  Let $\Diamond \gamma$ be a conjunct in $\theta$. From the fact that $M,s \models t(\theta)$ we get that there is a $\sigma \in S$ such that $M,s \models \Diamond(\sigma^\alpha \wedge \gamma)$. In other words, there exists a state $t$ such that $Rst$ and $M,t \models \sigma^\alpha \wedge \gamma$. From $\chi_2$ and the fact that $\sigma \in S$ and standard $\kdfv$ reasoning we have that $M,t \models \sigma^{\Box\Diamond}$. Thus, $M,t \models \sigma$. That means that $M,t \models \phi$ and that $t$ is accessible from $s$ in $\upd{M}{\phi}$, so $\upd{M}{\phi},s \models \Diamond \gamma$, for all $\Diamond \gamma$ in $\theta$. Since $\theta^\alpha$ is propositional, $\upd{M}{\phi},s \models \theta^\alpha$ follows from $M,s\models t(\theta)$, and thus $\upd{M}{\phi},s \models \neg \theta$ (which follows from $\upd{M}{\phi},s \models \neg \phi$) implies that there must be a conjunct $\Box\beta$ in $\theta$ such that $\upd{M}{\phi},s \models \neg \Box\beta$. Let $\beta_\theta$ be one such $\beta$.

Finally, we show that $\chi$ is satisfiable. It follows that any disjunctive normal form of $\chi$ is clear, and thus that $\phi$ is a disjunctive lying form. We show that $M,s \models \chi$ (where $M,s$ is the pointed model above). We have that $M,s \models \neg\phi \wedge \Diamond\phi$ by assumption, and $M,s \models \chi_1 \wedge \chi_2$ by definition. It remains to be shown that $M,s \models \chi_3$. Let $\theta \in T$.  By definition of $\beta_\theta$, $\upd{M}{\phi},s \models \Diamond \ourneg{\beta_\theta}$. That means that there exists a state $t$ such that $t$ is accessible from $s$ in the updated model and $\upd{M}{\phi},t \models \ourneg{\beta_\theta}$. Since $t$ is accessible from $s$ in $\upd{M}{\phi}$, $M,t \models \phi$. Let $\sigma$ be a disjunct of $\phi$ such that $M,t \models \sigma$. Thus $M,t \models \sigma^{\Box\Diamond}$, and from that and standard $\kdfv$ reasoning, $M,s \models \sigma^{\Box\Diamond}$, and thus $\sigma \in S$. From $M,t \models \sigma$ we know that $M,t \models \sigma^\alpha$, and since $\ourneg{\beta_\theta}$ is propositional, $M,t \models \ourneg{\beta_\theta} \wedge \sigma^\alpha$ for some $\sigma \in S$ and thus $M,s \models \Diamond(\ourneg{\beta_\theta} \wedge \sigma^\alpha)$.  Since $\theta \in T$ was arbitrary, $M,s \models \chi_3$.
%
\end{proof}

\section{Knowledge and iterated announcement whether} \label{sec.five}

The focus of our story is on lying and therefore on the believed announcement logic, not on the truthful announcement logic wherein lying is impossible. However some of our questions are also meaningful in a logic of knowledge change, interpreted on models with equivalence relations. As an example, let us address the matter of arbitrarily often iterated updates and unstable formulas. For $\kdfv$ models we could affirmatively answer the following question (Section \ref{sec.threetwo}):
\begin{quote} {\em 
Are there a model $(M,s)$ and a formula $\phi$, such that if we continue to announce in this model that $\phi$, the value of $\phi$ never stabilizes? \hfill (i)}
\end{quote}
This cannot be in $\sv$, as $\phi$ cannot be truthfully announced when $\phi$ is false in the actual state. An obvious question in the setting of $\sv$ models and {\em truthful} public announcements is:
\begin{quote} {\em 
Are there a model $(M,s)$ and a formula $\phi$, such that if we continue to announce in this model \pmb{whether} $\phi$, the value of $\phi$ never stabilizes? \hfill (ii)}
\end{quote}
The (truthful public) {\em announcement whether} $\phi$ announces the {\em value} of $\phi$. If $\phi$ is true, it is a truthful announcement of $\phi$, whereas if $\phi$ is false, it is a truthful announcement of $\neg\phi$. To question (ii) we do not know the answer. We conjecture that the answer is: {\bf no}. However, a related question is:
\begin{quote} {\em 
Are there a model $(M,s)$ and formulas $\phi$ \pmb{and} $\pmb{\psi}$, such that if we continue to announce whether $\phi$, the value \pmb{of} $\pmb{\psi}$ never stabilizes? \hfill (iii)}
\end{quote}
This question we can answer positively, and we think that it may be found in various epistemic scenarios of interest. In the scenario we present here, we do slightly better than (iii):
\begin{quote} {\em 
Are there a model $(M,s)$ and formulas $\phi$ and $\psi$, such that if we continue to announce \pmb{that} $\phi$, the value of $\psi$ never stabilizes?  \hfill (iv)}
\end{quote}
An answer to (iv) is also an answer to (iii).

Lack of stability in the sense of (iv) is more formally defined as follows. Let $\sigma \in \{0,1\}^\omega$ be {\em unstable}; we recall that this means that for all $n \in \Nat$ there are $m,k \geq n$ such that $\sigma_m = 1$ and $\sigma_k = 0$. Then we wish to find $(M,s)$, $\phi$, and $\psi$ such that for all $n \in \Naturals$, $M|_{\phi^n},s \models \sigma_{n+1}(\psi)$, and where $M$ is an $\sv$ model. (Where we further recall that $\sigma_n(\psi) = \psi$ if $\sigma_n=1$ and else $\sigma_n(\psi) =\neg\psi$; and that $M|_\phi$ is the ($\sv$ preserving) state elimination semantics for truthful public announcement---see Section \ref{sec.two} and Appendix A.)

Now consider $\sigma = (01)^\omega$ ($\sigma = 010101...$), the alternating string. Our solution is a simple adjustment of the model $N$ from Section \ref{sec.threetwo} to realize an unstable formula for believed announcement logic. The model below is the equivalence closure (reflexive, symmetric, and transitive closure) of that model $N$. To achieve that visually, we only have to replace the directed arrows involving the root and the node below the root by undirected links, and assume transitivity. The alternating values of $\sigma_i$ have been made explicit in three branches, for an example. As in Section \ref{sec.threetwo}, we decorate the states of $N$ with the variable $p$ according to $\sigma$; $\bullet$ means that $\atom$ is false ($\sigma_1 = \sigma_3 = 0$), and $\circ$ that $\atom$ is true ($\sigma_2 = 1$). 

\bigskip

\scalebox{.8}{
\begin{tikzpicture}[thick]
\node (0) at (0,0) {$\circ$};
\node (11) at (-1,0) {$\bullet$};
\node (12) at (-2,0) {$\bullet$};
\node (21) at (-.85,.85) {$\circ$};
\node (22) at (-1.7,1.7) {$\circ$};
\node (23) at (-2.55,2.55) {$\bullet$};
\node (24) at (-3.4,3.4) {$\bullet$};
\node (31) at (0,1) {$\bullet$};
\node (32) at (0,2) {$\bullet$};
\node (33) at (0,3) {$\circ$};
\node (34) at (0,4) {$\circ$};
\node (35) at (0,5) {$\bullet$};
\node (36) at (0,6) {$\bullet$};
\node (41) at (.85,.85) {$\circ$};
\node (42) at (1.7,1.7) {$\circ$};
\node (43) at (2.55,2.55) {$\bullet$};
\node (44) at (3.4,3.4) {$\bullet$};
\node (45) at (4.25,4.25) {$\circ$};
\node (46) at (5.1,5.1) {$\circ$};
\node (47) at (5.95,5.95) {$\bullet$};
\node (48) at (6.8,6.8) {$\bullet$};
\node (11a) at (-1,-.3) {$\sigma_1$};
\node (12a) at (-2,-.3) {$\sigma_1$};
\node (21a) at (-1.15,.85) {$\sigma_2$};
\node (22a) at (-2,1.7) {$\sigma_2$};
\node (23a) at (-2.85,2.55) {$\sigma_1$};
\node (24a) at (-3.7,3.4) {$\sigma_1$};
\node (31a) at (-.3,1.15) {$\sigma_3$};
\node (32a) at (-.3,2) {$\sigma_3$};
\node (33a) at (-.3,3) {$\sigma_2$};
\node (34a) at (-.3,4) {$\sigma_2$};
\node (35a) at (-.3,5) {$\sigma_1$};
\node (36a) at (-.3,6) {$\sigma_1$};
\node (dots) at (3,0) {$ \ $};
\draw[-] (0) to (11);
\draw[dashed,-] (11) to (12);
\draw[-] (0) to (21);
\draw[dashed,-] (21) to (22);
\draw[-] (22) to (23);
\draw[dashed,-] (23) to (24);
\draw[-] (0) to (31);
\draw[dashed,-] (31) to (32);
\draw[-] (32) to (33);
\draw[dashed,-] (33) to (34);
\draw[-] (34) to (35);
\draw[dashed,-] (35) to (36);
\draw[-] (0) to (41);
\draw[dashed,-] (41) to (42);
\draw[-] (42) to (43);
\draw[dashed,-] (43) to (44);
\draw[-] (44) to (45);
\draw[dashed,-] (45) to (46);
\draw[-] (46) to (47);
\draw[dashed,-] (47) to (48);
\draw[dotted,-] (0) to (dots);
\draw[-] (11) to (21);
\draw[-] (21) to (31);
\draw[-] (31) to (41);
\node (x) at (1,0.05) {$ \ $};
\draw[dotted,-] (41) to (x);
\node (newroot) at (0,-1) {$\bullet$};
\draw[dashed,-] (0) to (newroot);
\draw[-] (newroot) to (11);
\draw[dotted,-] (0) to (dots);
\end{tikzpicture}
}

\bigskip

In order to get $N|_{\phi^n},s \models \sigma_{n+1}(\psi)$ we decorate as before the model with values $\sigma_n$ for the propositional variable $\atom$ (where the model above demonstrates strict alternation in the first three values) and we take $\phi := \neg\Dia_b(\Box_a \atom \vel \Box_a \neg \atom)$ and $\psi := \neg(\Box_b\atom\vel\Box_b\neg\atom) \imp \Dia_a\Dia_b \Box_a \atom$. In other words, we split the unstable formula $\neg\Dia_b(\Box_a \atom \vel \Box_a \neg \atom) \et ((\atom\et\Box_b\neg\atom) \imp \Dia_a\Dia_b \Box_a \atom)$ that we employed in Section \ref{sec.threetwo} into a part doing the job of always removing the leaf and the node before the leaf of every branch, and another part checking whether there is a branch of length two from the root to a $\atom$ node, where the distinguishing formula for the root has become $\agentb$'s ignorance of $\atom$ (there are now two nodes that can be the root).

\bigskip

For a different example of $M|_{\phi^n},s \models \sigma_{n+1}(\psi)$, consider an infinite number of (possibly) muddy children, of which an infinite subset are in fact muddy, let $\phi$ be the usual proposition `nobody steps forward' (nobody knows whether he or she is muddy; see Section \ref{sec.2valid}), and let $\psi$ be the proposition `it is common knowledge that at least $n$ children are muddy but it is not common knowledge that at least $n+1$ children are muddy, where $n$ is even', and let $\sigma = (01)^\omega$, again. After father's initial announcement that at least one child is muddy (which serves as the initial model $N$ in this case), $\psi$ is false (because $n=1$ which  is odd). Whereas after nobody steps forward following the first request by father to do so if you know whether you are muddy, $\psi$ is true (because now it is common knowledge that at least two children must be muddy: $n=2$ which is even). And so on, ad infinitum.

\bigskip

If we consider non-public events, there are yet other interesting forms of unstable iteration on $\sv$ models. Running ahead of the next Section \ref{sec.six} (and see Appendix A): there are {\em action models} such that their iterated execution does not stabilize on a given model. The following is a vintage example in a two-agent setting. 

Consider a two-state epistemic model for two agents Anne (solid access) and Bob (dashed access) such that Anne is uncertain whether $\atom$ but Bob knows whether $\atom$. Actually, $\atom$ is true (the $\circ$ state, boxed). Iterated execution of the action model with two actions with preconditions $\neg \Box_\agent \atom$ and $\atom \et \Box_\agentb \neg \Box_\agent \atom$ transforms this two-state epistemic model into a three-state model, and vice versa, ad infinitum. We could think of the formula $\Box_\agentb \neg \Box_\agent \atom$ being alternatingly true and false in the $\atom$-world (the $\circ$-world) about which agent $\agent$ is uncertain. The execution can be visualized as follows. 

\[ \begin{array}{lllll}
\begin{tikzpicture}[thick]
\node (0) at (0,0) {\framebox{$\circ$}};
\node (1) at (2,0) {$\bullet$};
\draw[-] (0) to (1);
\node (0b) at (0,-2) {\color{white} $\circ$};
\end{tikzpicture}
&
\begin{tikzpicture}[thick]
\node (0) at (0,0) {$\times$};
\node (0b) at (0,-2) {\color{white} $\circ$};
\end{tikzpicture}
&
\begin{tikzpicture}[thick]
\node (0) at (0,0) {$\actiona$};
\node (1) at (2,0) {$\actionb$};
\node (0b) at (0,-2) {\color{white} $\circ$};
\node (0x) at (0.3,-0.5) {\footnotesize{$\neg \Box_\agent \atom$}};
\node (1x) at (2.5,-0.5) {\footnotesize{$\atom \et \Box_\agentb\neg \Box_\agent \atom$}};
\draw[dashed,-] (0) to (1);
\end{tikzpicture}
&
\begin{tikzpicture}[thick]
\node (0) at (0,0) {$=$};
\node (0b) at (0,-2) {\color{white} $\circ$};
\end{tikzpicture}
&
\begin{tikzpicture}[thick]
\node (0) at (0,0) {\framebox{$\circ$}};
\node (1) at (2,0) {$\bullet$};
\node (0b) at (0,-2) {$\circ$};
\draw[-] (0) to (1);
\draw[-,dashed] (0) to (0b);
\end{tikzpicture}
\\ \ \\
\begin{tikzpicture}[thick]
\node (0) at (0,0) {\framebox{$\circ$}};
\node (1) at (2,0) {$\bullet$};
\node (0b) at (0,-2) {$\circ$};
\draw[-] (0) to (1);
\draw[-,dashed] (0) to (0b);
\end{tikzpicture}
&
\begin{tikzpicture}[thick]
\node (0) at (0,0) {$\times$};
\node (0b) at (0,-2) {\color{white} $\circ$};
\end{tikzpicture}
&
\begin{tikzpicture}[thick]
\node (0) at (0,0) {$\actiona$};
\node (1) at (2,0) {$\actionb$};
\node (0b) at (0,-2) {\color{white} $\circ$};
\node (0x) at (0.3,-0.5) {\footnotesize{$\neg \Box_\agent \atom$}};
\node (1x) at (2.5,-0.5) {\footnotesize{$\atom \et \Box_\agentb\neg \Box_\agent \atom$}};
\draw[dashed,-] (0) to (1);
\end{tikzpicture}
&
\begin{tikzpicture}[thick]
\node (0) at (0,0) {$=$};
\node (0b) at (0,-2) {\color{white} $\circ$};
\end{tikzpicture}
&
\begin{tikzpicture}[thick]
\node (0) at (0,0) {\framebox{$\circ$}};
\node (1) at (2,0) {$\bullet$};
\draw[-] (0) to (1);
\node (0b) at (0,-2) {\color{white} $\circ$};
\end{tikzpicture}
\end{array}\]
In the regrettably unpublished \citep{sadzik:2006} the matter of stabilization after action model execution is discussed at great length.

\section{Private lies} \label{sec.six}

\begin{quote}
{\em {\bf True Lies and Butterflies } \\
Mei wants to invite two friends Zhu Yingtai and Liang Shanbo to a party. She knows that they are dying to get close to each other. Thus one will come if and only if (s)he believes that the other will come. Obviously they do not yet wish to admit this to each other, because as far as they know they are both still very uncertain about each other's feelings. Given this uncertainty, both in fact don't intend to come to the party. Mei now lies to Yingtai that Shanbo will come to the party and she lies to Shanbo that Yingtai will come to the party. As a result, they will both come to the party. \\ (This story is a free adaptation of the {\em Butterfly Lovers}, a famous Chinese folktale, see \url{https://en.wikipedia.org/wiki/Butterfly_Lovers}.)
}
\end{quote}

We can consider this an example of a true lie, because when Mei is telling to to Yingtai that Shanbo plans to come, in fact Shanbo is not planning to come, and when she is telling to Shanbo that Yingtai plans to come, in fact Yingtai is not (yet) planning to come. (For modelling convenience we assume that Yingtai is slow in making up her mind after Mei informs her about Shanbo.) After that, they both change their minds, and both lies have become true.

In order to model this story we will depart in two respects from the previous setting of believed announcement logic. In the first place these announcements are not public but private. In the second place the agents that we model change their minds. As we consider the factual propositions `Yingtai plans to go to the party' and `Shanbo plans to go to the party', changing your mind involves factual change. A logic allowing both formalizations is called action model logic (with factual change) \citep{baltagetal:1998,hvdetal.del:2007,jfaketal.lcc:2006}. This logic can be seen as a straightforward generalization of believed announcement logic. (Alternatively, we can model the dynamics in more protocol oriented dynamic epistemic logics, such as \cite{Wang10:phd,hvdetal.aij:2014}.) We only present some examples of action models and assume familiarity with the framework. For technical details, see Appendix A on page \pageref{sec.appendixa} or the above references.

For an initial model, we assume that Yingtai and Shanbo know of themselves whether they intend to go to the party but do not know it of the other one (and that this is known, that this is the background knowledge). This comes with the following model, wherein solid access represents the uncertainty of Yingtai and dashed access represents the uncertainty of Shanbo, and where worlds are named with the facts $p_y$ (`Yingtai comes to the party') and $p_s$ (`Shanbo comes to the party') that are true there, where $01$ stands for `$p_y$ is false and $p_s$ is true', etc. The designated point of the model is boxed: initially both do not intend to go to the party.

\bigskip

\begin{tikzpicture}[thick]
\node (00m) at (4,0) {\fbox{$00$}};
\node (10m) at (6,0) {$10$};
\node (01m) at (4,2) {$01$};
\node (11m) at (6,2) {$11$};
\draw[dashed,-] (00m) to (10m);
\draw[dashed,-] (01m) to (11m);
\draw[-] (00m) to (01m);
\draw[-] (10m) to (11m);
\end{tikzpicture}

\bigskip

Mei now lies to Yingtai, in private, that Shanbo goes to the party. For the convenience of the reader informed about action model logic, we can model this as a three-pointed action model as follows, on the left---where for convenience we have put the similar private lie to Shanbo about Yingtai next to it, on the right. 

\bigskip

\begin{tikzpicture}[thick]
\node (x) at (0,0) {\fbox{$\neg p_s$}};
\node (y) at (2,0) {$p_s$};
\node (z) at (1,2) {$\top$};
\draw[->] (x) to (y);
\draw[dashed,->] (x) to (z);
\draw[dashed,->] (y) to (z);
\end{tikzpicture}
\quad\quad\quad
\begin{tikzpicture}[thick]
\node (x) at (0,0) {\fbox{$\neg p_y$}};
\node (y) at (2,0) {$p_y$};
\node (z) at (1,2) {$\top$};
\draw[dashed,->] (x) to (y);
\draw[->] (x) to (z);
\draw[->] (y) to (z);
\end{tikzpicture}

\bigskip

The result of the first of these lying actions is as follows.

\bigskip

\begin{tikzpicture}[thick]
\node (00m) at (4,0) {\fbox{$00$}};
\node (10m) at (6,0) {$10$};
\node (01m) at (4,2) {$01$};
\node (11m) at (6,2) {$11$};
\node (00r) at (8,0) {$00$};
\node (10r) at (10,0) {$10$};
\node (01r) at (8,2) {$01$};
\node (11r) at (10,2) {$11$};
\draw[dashed,-] (00r) to (10r);
\draw[dashed,-] (01r) to (11r);
\draw[-] (00r) to (01r);
\draw[-] (10r) to (11r);
\draw[dashed,->, bend left=20] (00m) to (00r);
\draw[dashed,->, bend left=20] (01m) to (01r);
\draw[dashed,->, bend left=20] (10m) to (10r);
\draw[dashed,->, bend left=20] (11m) to (11r);
\draw[->] (00m) to (01m);
\draw[->] (10m) to (11m);
\end{tikzpicture}

\bigskip

Now Mei lies privately to Shanbo that Yingtai goes to the party. The result of that action is as follows. For the convenience of the reader we also depict (on the right) the restriction of this model to the submodel generated by the point $00$:

\bigskip

\scalebox{.8}{
\noindent
\begin{tikzpicture}[thick]
\node (00mb) at (4,0) {\fbox{$00$}};
\node (10mb) at (6,0) {$10$};
\node (01mb) at (4,2) {$01$};
\node (11mb) at (6,2) {$11$};
\node (00rb) at (8,0) {$00$};
\node (10rb) at (10,0) {$10$};
\node (01rb) at (8,2) {$01$};
\node (11rb) at (10,2) {$11$};

\node (00m) at (4,4) {$00$};
\node (10m) at (6,4) {$10$};
\node (01m) at (4,6) {$01$};
\node (11m) at (6,6) {$11$};
\node (00r) at (8,4) {$00$};
\node (10r) at (10,4) {$10$};
\node (01r) at (8,6) {$01$};
\node (11r) at (10,6) {$11$};

\draw[dashed,-] (00r) to (10r);
\draw[dashed,-] (01r) to (11r);
\draw[-] (00r) to (01r);
\draw[-] (10r) to (11r);
\draw[dashed,->, bend left=20] (00m) to (00r);
\draw[dashed,->, bend left=20] (01m) to (01r);
\draw[dashed,->, bend left=20] (10m) to (10r);
\draw[dashed,->, bend left=20] (11m) to (11r);
\draw[->] (00m) to (01m);
\draw[->] (10m) to (11m);

\draw[dashed,->] (00rb) to (10rb);
\draw[dashed,->] (01rb) to (11rb);
\draw[->, bend left=20] (00rb) to (00r);
\draw[->, bend left=20] (10rb) to (10r);
\draw[->, bend left=20] (01rb) to (01r);
\draw[->, bend left=20] (11rb) to (11r);
\draw[dashed,->, bend left=20] (00mb) to (10rb);
\draw[dashed,->, bend left=20] (01mb) to (11rb);
\draw[dashed,->, bend left=20] (10mb) to (10rb);
\draw[dashed,->, bend left=20] (11mb) to (11rb);
\draw[->, bend left=20] (00mb) to (01m);
\draw[->, bend left=20] (10mb) to (11m);
\draw[->, bend left=20] (01mb) to (01m);
\draw[->, bend left=20] (11mb) to (11m);
\end{tikzpicture}
\quad\quad\quad\quad
\begin{tikzpicture}[thick]
\node (00mb) at (4,0) {\fbox{$00$}};
\node (10rb) at (10,0) {$10$};
\node (01m) at (4,6) {$01$};

\node (00r) at (8,4) {$00$};
\node (10r) at (10,4) {$10$};
\node (01r) at (8,6) {$01$};
\node (11r) at (10,6) {$11$};

\draw[dashed,-] (00r) to (10r);
\draw[dashed,-] (01r) to (11r);
\draw[-] (00r) to (01r);
\draw[-] (10r) to (11r);
\draw[dashed,->, bend left=20] (00mb) to (10rb);
\draw[->, bend left=20] (00mb) to (01m);
\draw[->, bend left=20] (10rb) to (10r);
\draw[dashed,->, bend left=20] (01m) to (01r);
\end{tikzpicture}
}

\bigskip

The above model formalizes that: Yingtai is not going to the party, believes that Shanbo goes to the party, and believes that Shanbo is uncertain whether she goes to the party; and that: Shanbo is not going to the party, believes that Yingtai goes to the party, and believes that Yingtai is uncertain whether he goes to the party.

\bigskip

We now first let Yingtai change her mind and then Shanbo change his mind. Yingtai changing her mind can again be formalized as an action model, namely as a private assignment to Yingtai; and similarly, Shanbo changing his mind as a private assignment to Shanbo. It is important here that a {\em public} assignment is an improper way to formalize this action: a public assignment of Yingtai going to the party if she believes that Shanbo goes to the party would be informative to Shanbo in case he were to believe that she believed that he was going to the party. Because in case he was uncertain if she would go to the party, he would then learn from this public assignment that she would come to the party for his sake. Boring. Because exactly the absence of this sort of knowledge of the other's intentions makes first lovers' meetings so thrilling. That kind of uncertainty should {\em not} be resolved. Therefore, we formalize it as a private assignment. Interestingly, in the current model the result of a public and of a private assignment (the result of Yingtai privately changing her mind or publicly changing her mind) is the same. But that is because both Yingtai and Shanbo believe that the other is uncertain whether they go to the party.

Below on the left is the action model for Yingtai changing her mind, and on the right, the one for Shanbo changing his mind. (So, for example, according to our conventions, in the left action model the solid relation, that of Yingtai, has identity access, and the dashed relation, for Shanbo, has only a reflexive arrow in the point that the dashed arrow is pointing to.) 

\bigskip

\begin{tikzpicture}[thick]
\node (x) at (0,0) {\fbox{$p_y := \Box_y p_s$}};
\node (y) at (4,0) {$\top$};
\draw[dashed,->] (x) to (y);
\end{tikzpicture}
\quad\quad\quad
\begin{tikzpicture}[thick]
\node (x) at (0,0) {\fbox{$p_s := \Box_s p_y$}};
\node (y) at (4,0) {$\top$};
\draw[->] (x) to (y);
\end{tikzpicture}

\bigskip

We now depict, from left to right, once more the model before they change their minds, the model resulting from executing the action of Yingtai changing her mind, and the model resulting from Shanbo changing his mind, where once again we restrict the actually resulting models to the point-generated subframes.

\bigskip

\scalebox{.8}{
\noindent
\begin{tikzpicture}[thick]
\node (00mb) at (6.5,2.5) {\fbox{$00$}};
\node (10rb) at (10,2.5) {$10$};
\node (01m) at (6.5,6) {$01$};

\node (00r) at (8,4) {$00$};
\node (10r) at (10,4) {$10$};
\node (01r) at (8,6) {$01$};
\node (11r) at (10,6) {$11$};

\draw[dashed,-] (00r) to (10r);
\draw[dashed,-] (01r) to (11r);
\draw[-] (00r) to (01r);
\draw[-] (10r) to (11r);
\draw[dashed,->] (00mb) to (10rb);
\draw[->] (00mb) to (01m);
\draw[->] (10rb) to (10r);
\draw[dashed,->] (01m) to (01r);
\end{tikzpicture}
$\stackrel {\text{Yingtai}} \Imp$
\begin{tikzpicture}[thick]
\node (00mb) at (6.5,2.5) {\fbox{$10$}};
\node (10rb) at (10,2.5) {$10$};
\node (01m) at (6.5,6) {$11$};

\node (00r) at (8,4) {$00$};
\node (10r) at (10,4) {$10$};
\node (01r) at (8,6) {$01$};
\node (11r) at (10,6) {$11$};

\draw[dashed,-] (00r) to (10r);
\draw[dashed,-] (01r) to (11r);
\draw[-] (00r) to (01r);
\draw[-] (10r) to (11r);
\draw[dashed,->] (00mb) to (10rb);
\draw[->] (00mb) to (01m);
\draw[->] (10rb) to (10r);
\draw[dashed,->] (01m) to (01r);
\end{tikzpicture}
$\stackrel {\text{Shanbo}} \Imp$
\begin{tikzpicture}[thick]
\node (00mb) at (6.5,2.5) {\fbox{$11$}};
\node (10rb) at (10,2.5) {$11$};
\node (01m) at (6.5,6) {$11$};

\node (00r) at (8,4) {$00$};
\node (10r) at (10,4) {$10$};
\node (01r) at (8,6) {$01$};
\node (11r) at (10,6) {$11$};

\draw[dashed,-] (00r) to (10r);
\draw[dashed,-] (01r) to (11r);
\draw[-] (00r) to (01r);
\draw[-] (10r) to (11r);
\draw[dashed,->] (00mb) to (10rb);
\draw[->] (00mb) to (01m);
\draw[->] (10rb) to (10r);
\draw[dashed,->] (01m) to (01r);
\end{tikzpicture}
}

\bigskip

Now that Yingtai and Shanbo have changed their minds, the lies have become the truth! They both go the party, and they will both expect the other to be surprised to find them at the party.\footnote{This is in accordance with the resulting model, above on the right; for example, in the root \fbox{11} of that model Yingtai believes ---move to the top 11--- that Shanbo believes ---move to 01 and 11--- that Yingtai is uncertain whether Shanbo comes to the party, both when she comes to the party ---the equivalence class consisting of 00 and 01--- as when she does not come to the party ---the cluster consisting of 10 and 11.} They declare their love to each other and they live happily ever after. 

\weg{

\medskip

We conclude with two further technical observations on the model constructions and the analysis.

Firstly, one can imagine Yingtai already changing her mind before Mei informs Shanbo that Yingtai is going to the party---in which case Mei would no longer have been lying. But that is a modelling artifact. To avoid such a scenario we simply assume that Mei simultaneously sends two {\em letters} to Yingtai and to Shanbo containing the lies. At that moment both are indeed lies. Then again, the information change affected in Yingtai and Shanbo depends on the moment the letter is opened... It does not greatly matter: Yingtai changing her mind can be modelled both before and after Mei talking to Shanbo and the resulting pointed models are bisimilar. But lying twice is far more interesting than lying once only.

Secondly, as mentioned, when executing the private assignments we restricted ourselves to point-generated subframes. Without that restriction, for example, based on the model on the left above, the model in the middle above would look as follows:

\bigskip

\scalebox{.8}{
\noindent
\begin{tikzpicture}[thick]
\node (00mb) at (6.5,2.5) {\fbox{$10$}};
\node (10rb) at (10,2.5) {$10$};
\node (01m) at (6.5,6) {$11$};

\node (00r) at (8,4) {$00$};
\node (10r) at (10,4) {$10$};
\node (01r) at (8,6) {$01$};
\node (11r) at (10,6) {$11$};

\node (x00mb) at (12.5,2.5) {$00$};
\node (x10rb) at (16,2.5) {$10$};
\node (x01m) at (12.5,6) {$01$};

\node (x00r) at (14,4) {$00$};
\node (x10r) at (16,4) {$10$};
\node (x01r) at (14,6) {$01$};
\node (x11r) at (16,6) {$11$};

\draw[dashed,->, bend left=20] (00r) to (x00r);
\draw[dashed,->, bend left=20] (01r) to (x01r);
\draw[dashed,->, bend left=20] (10r) to (x10r);
\draw[dashed,->, bend left=20] (11r) to (x11r);
\draw[-] (00r) to (01r);
\draw[-] (10r) to (11r);
\draw[dashed,->, bend left=20] (00mb) to (x10rb);
\draw[->] (00mb) to (01m);
\draw[->] (10rb) to (10r);
\draw[dashed,->, bend left=20] (01m) to (x01r);

\draw[dashed,-] (x00r) to (x10r);
\draw[dashed,-] (x01r) to (x11r);
\draw[-] (x00r) to (x01r);
\draw[-] (x10r) to (x11r);
\draw[dashed,->] (x00mb) to (x10rb);
\draw[->] (x00mb) to (x01m);
\draw[->] (x10rb) to (x10r);
\draw[dashed,->] (x01m) to (x01r);
\end{tikzpicture}
}

\bigskip

Clearly the simplified visualization is better.

}

\section{What lies in the future?} \label{sec.last}

We have modelled the true lie $\phi$ in believed announcement logic as the validity 
\[ \neg\phi \imp [\phi]\phi \]
or as the satisfiability of \[ \neg \phi \et [\phi]\phi, \] 
where the believed announcement models {\em informative change} and not factual change. Finding satisfiable true lies, of which there are many, seems a first step towards finding valid true lies, of which there are few. The latter seem to give more insight. They have the shape of correctness statements $\phi \imp [\alpha] \psi$ in Hoare logic; and indeed we have seen that valid true lies come with strong syntactic restrictions. In the private lie example we also modelled actions involving {\em factual change}. With this tool (and action model logic in its full generality) in hand we can also formalize the Pang Juan and the Arnold Schwarzenegger true lies, and make them fit a similar pattern. However, in this case we stop at satisfiability and the mere description of the examples. We defer a more general treatment and generic patterns involving validity to future research.

The Pang Juan true lie can be formalized as a two-pointed action model where one action has precondition $p$, the other action has precondition $\neg p$, where the postcondition is $p := \top$ in both actions (`no matter what was the case, you will now die'), and where the agent only considers the action with precondition $p$ possible. The designated point is the action where $p$ is false (Pang Juan does not die). Let us call this action $\alpha(p)$.

\medskip

\begin{tikzpicture}[thick]
\node (x) at (0,0) {\fbox{$\neg p; p := \top$}};
\node (y) at (4,0) {$p; p := \top$};
\draw[->] (x) to (y);
\end{tikzpicture}

\medskip

Alternatively, we can separate the observation part from the ontic change part, and consider the composition of those two actions. The former is then the {\em lie that $p$}, i.e., the believed announcement that $p$ when $p$ is false. The action model equivalent of a believed announcement is a two-point action model (see Appendix A). The latter a singleton action called public assignment (namely of $\top$ to $p$). So we get:

\medskip

\begin{tikzpicture}[thick]
\node (x) at (0,0) {\fbox{$\neg p$}};
\node (y) at (2,0) {$p$};
\draw[->] (x) to (y);
\end{tikzpicture}
\quad 
\begin{tikzpicture}[thick]
\node (x) at (0,0) {composed with};
\end{tikzpicture}
\quad 
\begin{tikzpicture}[thick]
\node (x) at (0,0) {\fbox{$p := \top$}};
\end{tikzpicture}

\medskip

As the observing agent is Pang Juan and the killing agent one of his opponents, one may prefer the sequence of two epistemic actions over the single epistemic action. On the other hand, as Pang Juan's observation and his subsequent death seem inextricably linked, one might prefer the single epistemic action. However, there is no notion of agency in dynamic epistemic logics: the composition of two actions is indistinguishable from the single action that combines the informative and ontic part: in the logic we have that $[\alpha(p)]\phi$ is equivalent to $[p][p := \top]\phi$, for all $\phi$. So it does not matter. (We also recall that Pang Juan would die no matter which observation he made. We can satisfy this requirement by replacing the announcement that $p$ by the announcement that $q$ in the modelling with two epistemic actions. But that does not give any insight.) We end up with the satisfiability of 
\[ \neg p \et [\alpha(p)]p \] but without being able to generalize this to a validity of shape $\neg p \imp [\alpha(p)]p$. This would require stronger or different logics, involving agency, or other notions of action or interaction \citep{peteretal:2016}, or of {\em self-fulfilling prophesy}.

In the Arnold Schwarzenegger example there is a lot of belief change and factual change going on. The epistemic goals of both agents, and their respective protocols in order to reach those goals, seem to play an important part. 

Let $p_j$ represent `Jamie Lee Curtis is a spy' and $p_a$ represent `Arnold Schwarzenegger is a spy'. At first Arnold is a spy ($p_a$), Arnold believes that he is a spy ($B_a p_a$ ---of course, the reader might say; but hang on!), Jamie believes he is not a spy ($B_j \neg p_a$), and Arnold's goal is to keep it that way ($B_j \neg p_a$). Jamie is not a spy ($\neg p_j$), she believes that she is not a spy ($B_j \neg p_j$), and Arnold also believes this ($B_a \neg p_j$). Jamie's goal also is to keep it that way. So they both believe that the other is not a spy, and this is a stable situation, because both want the other to believe that. We now get a number of transitions where beliefs, facts, and goals all change. The initial situation is (i). \[\begin{array}{l|llllll|l|l}
&&&&&&& \text{goal } a & \text{goal } j \\
\hline
(i) & p_a & \neg p_j & B_a p_a & B_j \neg p_a & B_a \neg p_j & B_j \neg p_j & B_j \neg p_a & B_a \neg p_j \\
(ii) & p_a & \neg p_j & B_a p_a & B_j \neg p_a & B_a \neg p_j & \pmb{B_j p_j} & B_j \neg p_a & \pmb{B_a p_j} \\
(iii) & p_a & \neg p_j & B_a p_a & \pmb{B_j p_a} & B_a \neg p_j & B_j p_j & B_j \neg p_a & B_a p_j \\
(iv) & p_a & \pmb{p_j} & B_a p_a & B_j p_a & \pmb{B_a p_j} & B_j p_j & \pmb{B_j p_a} & B_a p_j \\
\end{array} \]
The epistemic goals of the agents are only realized in (i) and (iv); only there they are stable. Indeed, the last stage is the happy ending. (For details, please see the movie.) In (ii), Jamie has started to enact a spy in order to make Arnold believe that she is a spy; she even believes that she has become a spy, without realizing that she has been set up: she is not really a spy. In (iii), she finds out that Arnold is a spy (because unlike Arnold, third parties believe that she is a spy and that she is collaborating with Arnold). In (iv) ---abstracting from intervening details involving helicopter fights and Miami skyscrapers--- they end up collaborating as spies and believing in each other, now truthfully so. Again, a stable situation.

Let us call the composition of all these actions $\alpha(p_a,p_j)$. Again we have satisfiability of 
\[ (p_a \et \neg p_j) \et [\alpha(p_a,p_j)](p_a \et p_j) \]
but no general pattern involving validity of $(p_a \et \neg p_j) \imp [\alpha(p_a,p_j)](p_a \et p_j)$ or any other additional insight. Lacking in our formal analysis in dynamic epistemic logic are much stronger notions of {\em epistemic protocol} (how does one realize an epistemic goal given a belief state?), protocol logics, and, again, notions of agency \citep{bolanderetal:2011,Wang10:phd}. Second-order false belief scenarios are modelled in \cite{brauneretal:2016,arslanetal:2015}.


\section{Conclusions}

We have presented a large variety of communicative scenarios involving agents truthtelling and lying while keeping their beliefs consistent. Of particular interest were iterations of announcements where the beliefs of the agents never stabilize: the models keep changing with every next announcement. The investigation can be seen as the obvious next step given existing in-depth investigations of successful and self-refuting updates changing agents' knowledge, i.e., on models where epistemic stances are interpreted with equivalence relations. We have carried this forward not merely to the $\kdfv$ models for believing agents, but also to models without structural properties. Two new types of update then come to the fore: true lies (that become true after a lying announcement) and impossible lies (that stay false even after a lying announcement), and also iterations of such announcements. An open question on these iterations is which $\sigma$-valid formulas are realizable.

\section*{Acknowledgements}

Yanjing Wang thanks C\'edric D\'egremont and Andreas Witzel for discussions on the example in Section \ref{sec.five}, and he acknowledges support from the National Program for Special Support of Eminent Professionals. Hans van Ditmarsch acknowledges support from ERC project EPS 313360. He is also affiliated to IMSc (Institute for Mathematical Sciences), Chennai, India. We thank the reviewers of Synthese for their helpful comments.

\bibliographystyle{natbib}
\bibliography{biblio2017}

\section*{Appendix A: Action models} \label{sec.appendixa}

Assume an epistemic model $M = ( \States, R, V )$, and a formula $\phi$. We recall the semantics of {\em believed public announcement} (arrow elimination semantics for public announcement). 
\[ \begin{array}{lcl}
M,\state \models [\phi] \psi &\mbox{iff} & M|\phi,\state \models \psi 
\end{array} \] 
where epistemic model $M|\phi = ( \States, R^\phi, V )$ is as $M$ except that for all $\agent\in\Agents$, $R^\phi_\agent \ := \ R_\agent \inter \ (\States \times \II{\phi}_M)$; and where $\II{\phi}_M := \{ \state\in\States \mid M,\state\models \phi\}$. 

The semantics of {\em truthful public announcement} (state elimination semantics for public announcement) is as follows.
\[ \begin{array}{lcl}
M,\state \models [\phi] \psi &\mbox{iff} & M,\state\models \phi \text{ implies } M|_\phi,\state \models \psi
\end{array} \] 
where $M|_\phi = ( \States', R', V')$ is such that $\States' = \II{\phi}_M$, $R' = R \inter (\II{\phi}_M \times \II{\phi}_M)$, and $V' = V \inter \II{\phi}_M$. 

These different semantics are the same in the following important sense. If the announcement formula is true, the believed announcement semantics and truthful announcement semantics result in bisimilar models. In other words, as bisimilar models have the same logical theory: they cannot be distinguished by a formula in the logic, such as the resulting beliefs of the agents. 

\bigskip

{\em Action model logic} is a generalization of public announcement logic, namely to (possibly) non-public actions. We present the version with factual change. In the language we only have to replace the public announcement modalities $[\phi]\psi$ with action model modalities $[\amodel,\actiona]\psi$, for `after execution of epistemic action $(\amodel,\actiona)$, $\psi$' (is true). An action model is like an epistemic model, only instead of a valuation it has preconditions and postconditions. The syntactic primitive $[\amodel,\actiona]\psi$ may seem a peculiar mix of syntax and semantics, but there is a way to see this as a properly inductive construct in a two-typed language with both formulas and epistemic actions, because the preconditions and postconditions of these actions are again formulas. We now proceed with the technical details.

An {\em action model} $\amodel = (\Actions,\arel,\pre,\post)$ for language $\lang$ (assumed to be simultaneously defined with primitive construct $[\amodel,\actiona]\phi$, see above) consists of a domain $\Actions$ of {\em actions}, an {\em accessibility function} $\arel: \Agents \imp {\mathcal P}(\Actions \times \Actions)$, where each $\arel(\agent)$, for which we write $\arel_\agent$, is an accessibility relation, a {\em precondition function} $\pre: \Actions \imp\lang$, that assigns to each action its executability precondition, and {\em postcondition function} $\post : \Actions \imp \Atoms \not\imp \lang$, where it is required that each $\post(\actiona)$ only maps a finite (possibly empty) subset of all atoms to a formula. For $\actiona \in \Actions$, $(\amodel,\actiona)$ is an {\em epistemic action}.

The semantics of action model execution is as follows.
\[ \begin{array}{lcl}
M,\state \models [\amodel,\actiona] \psi &\mbox{iff} & M,\state\models\pre(\actiona) \text{ implies } M\otimes\amodel,(\state,\actiona) \models \psi 
\end{array} \] 
where $M\otimes\amodel = (\States', R', V')$ (known as {\em update of $M$ with $\amodel$}, or as the {\em result of executing $\amodel$ in $M$}) is such that $S' = \{(\state,\actiona) \mid M,\state\models\actiona\}$; $((\state,\actiona),(\stateb,\actionb)) \in R_\agent$ iff $(\state,\stateb) \in R_\agent$ and $(\actiona,\actionb) \in \arel_\agent$; and $(\state,\actiona) \in V'(\atom)$ iff $M,\state\models\post(\actiona)(\atom)$ for all $\atom$ in the domain of $\post$, and otherwise $(\state,\actiona) \in V'(\atom)$ iff $\state\in V(\atom)$.

A truthful public announcement of $\phi$ corresponds to a singleton action model $\amodel = (\Actions,\arel,\pre,\post)$ with $\Actions = \{\actiona\}$; $\arel_a = \{(\actiona,\actiona)\}$ for all $\agent\in\Agents$; $\pre(\actiona) = \phi$ (and empty domain for postconditions). A believed public announcement of $\phi$ corresponds to a a two-point action model $\amodel = (\Actions,\arel,\pre,\post)$ with $\Actions = \{\actiona,\actionb\}$; $\arel_a = \{(\actionb,\actiona), (\actiona,\actiona)\}$ for all $\agent\in\Agents$; $\pre(\actiona) = \phi$ and $\pre(\actionb) = \neg\phi$ (and again the empty domain for postconditions). If the designated point is $\actionb$ it is a public lie and if the designated point is $\actiona$ it is a truthful (believed) announcement. 

\section*{Appendix B: Depicting knowledge and belief}  \label{sec.appendixb}

To depict epistemic models, a pair of states in the accessibility relation is represented by an arrow between those states. There are visual conventions to simplify the display of multi-agent $\sv$ models and multi-agent $\kdfv$ models. The conventions guarantee that, given a number of agents, any of the following three uniquely determines the two others: (i) the epistemic frame, (ii) the fully displayed visualization, and (iii) the simplified visualization.

In $\sv$ models all accessibility relations are equivalence relations, so that we can partition the domain in equivalence classes: sets of indistinguishable states. In the visualization we can therefore assume transitive, symmetric, and reflexive closure. Visually, indistinguishable states are connected by a path (transitive closure) of undirected edges (symmetric closure). Reflexive arrows not drawn (reflexive closure). Singleton equivalence classes do not show in the visualation. But they are inferable as long as we know how many agents there are.

Examples are as follows. The simplified visualization is on the right and the visualization with all arrows of the accessibility relation is on the left.

\[\begin{array}{lll}
\begin{tikzpicture}[thick]
\node (0) at (0,0) {$\bullet$};
\node (1) at (2,0) {$\bullet$};
\draw[->,bend left=20] (0) to (1);
\draw[->,bend left=20] (1) to (0);
\draw[->] (0) edge[loop above,looseness=12] (0); 
\draw[->] (1) edge[loop above,looseness=12] (1); 
\end{tikzpicture} & \hspace{2cm} &
\begin{tikzpicture}[thick]
\node (0) at (0,0) {$\bullet$};
\node (1) at (2,0) {$\bullet$};
\draw[-] (0) to (1);
\end{tikzpicture} \\
\begin{tikzpicture}[thick]
\node (0) at (0,0) {$\bullet$};
\node (1) at (2,0) {$\bullet$};
\node (2) at (4,0) {$\bullet$};
\draw[->,bend left=20] (0) to (1);
\draw[->,bend left=20] (1) to (0);
\draw[->,bend left=20] (1) to (2);
\draw[->,bend left=20] (2) to (1);
\draw[->,bend left=40] (0) to (2);
\draw[->,bend left=30] (2) to (0);
\draw[->] (0) edge[loop above,looseness=12] (0); 
\draw[->] (1) edge[loop above,looseness=12] (1); 
\draw[->] (2) edge[loop above,looseness=12] (2); 
\end{tikzpicture} & &
\begin{tikzpicture}[thick]
\node (0) at (0,0) {$\bullet$};
\node (1) at (2,0) {$\bullet$};
\node (2) at (4,0) {$\bullet$};
\draw[-] (0) to (1);
\draw[-] (1) to (2);
\end{tikzpicture} \\
\begin{tikzpicture}[thick]
\node (0) at (0,0) {$\bullet$};
\node (1) at (2,0) {$\bullet$};
\node (2) at (4,0) {$\bullet$};
\draw[->,bend left=20] (0) to (1);
\draw[->,bend left=20] (1) to (0);
\draw[->,dashed,bend left=20] (1) to (2);
\draw[->,dashed,bend left=20] (2) to (1);
\draw[->] (0) edge[loop above,looseness=12] (0); 
\draw[->] (1) edge[loop above,looseness=12] (1); 
\draw[->] (2) edge[loop above,looseness=12] (2); 
\draw[->] (0) edge[loop above,dashed,in=60,out=120,looseness=10] (0); 
\draw[->] (1) edge[loop above,dashed,in=60,out=120,looseness=10] (1); 
\draw[->] (2) edge[loop above,dashed,in=60,out=120,looseness=10] (2); 
\end{tikzpicture} & &
\begin{tikzpicture}[thick]
\node (0) at (0,0) {$\bullet$};
\node (1) at (2,0) {$\bullet$};
\node (2) at (4,0) {$\bullet$};
\draw[-] (0) to (1);
\draw[-,dashed] (1) to (2);
\end{tikzpicture}
\end{array}\]

In $\kdfv$ models all accessibility relations are transitive, euclidean and serial. Therefore we can divide the domain in {\em clusters}. A cluster is a set of indistinguishable states (just as an equivalence class in $\sv$), but those states may also appear to be indistinguishable from the perspective of states outside the cluster (unlike in $\sv$), the so-called {\em unreachable states}. In unreachable states beliefs are incorrect. In the $\kdfv$ visualization we can therefore assume transitive and euclidean closure, and (this is crucial) {\em we can assume reflexive closure of states in a cluster}. This is because a state in a cluster is a state that is reachable from another state. Reflexivity then follows from seriality and euclidicity: from $(x,y) \in R$ and once more) $(x,y) \in R$ follows with euclidicity that $(y,y) \in R)$. It is therefore not so different from the $\sv$ visualization. The only difference is that we also draw arrows from states outside a cluster to a (exactly one) cluster. From the source state of such an arrow is not only accessible the state in the cluster that is the endpoint of that arrow,   but also all other states in that cluster. 

Examples are as follows. 

\[\begin{array}{lll}
\begin{tikzpicture}[thick]
\node (0) at (0,0) {$\bullet$};
\node (1) at (2,0) {$\bullet$};
\draw[->] (0) to (1);
\draw[->] (1) edge[loop above,looseness=12] (1); 
\end{tikzpicture} & \hspace{2cm} &
\begin{tikzpicture}[thick]
\node (0) at (0,0) {$\bullet$};
\node (1) at (2,0) {$\bullet$};
\draw[->] (0) to (1);
\end{tikzpicture} \\
\begin{tikzpicture}[thick]
\node (0) at (0,0) {$\bullet$};
\node (1) at (2,0) {$\bullet$};
\node (2) at (4,0) {$\bullet$};
\draw[->] (0) to (1);
\draw[->,bend left=20] (1) to (2);
\draw[->,bend left=20] (2) to (1);
\draw[->,bend left=40] (0) to (2);
\draw[->] (1) edge[loop above,looseness=12] (1); 
\draw[->] (2) edge[loop above,looseness=12] (2); 
\end{tikzpicture} & &
\begin{tikzpicture}[thick]
\node (0) at (0,0) {$\bullet$};
\node (1) at (2,0) {$\bullet$};
\node (2) at (4,0) {$\bullet$};
\draw[->] (0) to (1);
\draw[-] (1) to (2);
\end{tikzpicture} \\
\begin{tikzpicture}[thick]
\node (0) at (0,0) {$\bullet$};
\node (1) at (2,0) {$\bullet$};
\node (2) at (4,0) {$\bullet$};
\draw[->] (0) to (1);
\draw[->,dashed] (1) to (2);
\draw[->] (1) edge[loop above,looseness=12] (1); 
\draw[->] (2) edge[loop above,looseness=12] (2); 
\draw[->] (0) edge[loop above,dashed,in=60,out=120,looseness=10] (0); 
\draw[->] (2) edge[loop above,dashed,in=60,out=120,looseness=10] (2); 
\end{tikzpicture} & &
\begin{tikzpicture}[thick]
\node (0) at (0,0) {$\bullet$};
\node (1) at (2,0) {$\bullet$};
\node (2) at (4,0) {$\bullet$};
\draw[->] (0) to (1);
\draw[->,dashed] (1) to (2);
\end{tikzpicture}
\end{array}\]

A good way to understand the visualization rules is to reproduce the full visualization on the left from the simplified visualization on the right. The role of singleton clusters takes some getting used to. For example, in the first $\kdfv$ example, on the right: As there is an outgoing arrow from the left state, it is not a state in a cluster but an unreachable state. Therefore it has no loop in the completion, on the left. As there is an incoming arrow in the right state it is part of a cluster. This is therefore a singleton cluster. This state must have a loop in the completion, on the left. In the last $\kdfv$ example, as there is an outgoing arrow {\em for the solid relation} from the left state, it is, just as in the first example, not a state in a cluster but an unreachable state. However, as there there is no outgoing arrow {\em for the dashed relation} from the left state, we can infer that it is a singleton cluster and thus infer a loop for the dashed relation. And indeed, on the left, it is there. Etc.

The $\sv$ visualization is well-known. The $\kdfv$ visualization is not well-known.

\end{document}